\theoremstyle{plain}
\newcommand{\chatoDisplayMode}[1]{#1}
\definecolor{MyRed}{rgb}{0.6,0.0,0.0} 
\definecolor{MyBlack}{rgb}{0.1,0.1,0.1} 
\newcommand{\inred}[1]{{\color{MyRed}\sf\textbf{\textsc{#1}}}}
\newcommand{\frameit}[2]{
  \begin{center}
  {\color{MyRed}
  \framebox[.9\columnwidth][l]{
    \begin{minipage}{.85\columnwidth}
    \inred{#1}: {\sf\color{MyBlack}#2}
    \end{minipage}
  }\\
  }
  \end{center}
}
\newcommand{\note}[2][]{\chatoDisplayMode{\def\@tmpsig{#1}\frameit{{\Pointinghand} Note}{#2\ifx \@tmpsig \@empty \else \mbox{ --\em #1}\fi}}}
\newcommand{\todo}[2][]{\chatoDisplayMode{\def\@tmpsig{#1}\frameit{{\Writinghand} To-do}{#2\ifx \@tmpsig \@empty \else \mbox{ --\em #1}\fi}}}
\newcommand{\abbrevStyle}[1]{#1}
\newcommand{\ie}{\abbrevStyle{i.e.}\xspace}
\newcommand{\eg}{\abbrevStyle{e.g.}\xspace}
\newcommand{\cf}{\abbrevStyle{cf.}\xspace}
\newcommand{\Secref}[1]{Sec.~\ref{#1}}
\newcommand{\Eqnref}[1]{Eq.~\ref{#1}}
\newcommand{\Tabref}[1]{Table~\ref{#1}}
\newcommand{\Figref}[1]{Fig.~\ref{#1}}
\newcommand{\Appref}[1]{Appendix~\ref{#1}}
\newcommand{\Thmref}[1]{Thm.~\ref{#1}}
\newcommand{\xhdr}[1]{\vspace{1.7mm}\noindent{{\bf #1.}}}
\newcommand{\textcite}[1]{\citeauthor{#1} \shortcite{#1}}
\newcommand{\hide}[1]{}
\newcommand{\iffont}[2]{\ifthenelse{\equal{\f@family}{#1}}{#2}{}}
  \DeclareSymbolFont{greek}{OML}{cmm}{m}{n}
  \DeclareMathSymbol{\alpha}{\mathalpha}{greek}{"0B}
  \DeclareMathSymbol{\beta}{\mathalpha}{greek}{"0C}
  \DeclareMathSymbol{\gamma}{\mathalpha}{greek}{"0D}
  \DeclareMathSymbol{\delta}{\mathalpha}{greek}{"0E}
  \DeclareMathSymbol{\epsilon}{\mathalpha}{greek}{"0F}
  \DeclareMathSymbol{\zeta}{\mathalpha}{greek}{"10}
  \DeclareMathSymbol{\eta}{\mathalpha}{greek}{"11}
  \DeclareMathSymbol{\theta}{\mathalpha}{greek}{"12}
  \DeclareMathSymbol{\iota}{\mathalpha}{greek}{"13}
  \DeclareMathSymbol{\kappa}{\mathalpha}{greek}{"14}
  \DeclareMathSymbol{\lambda}{\mathalpha}{greek}{"15}
  \DeclareMathSymbol{\mu}{\mathalpha}{greek}{"16}
  \DeclareMathSymbol{\nu}{\mathalpha}{greek}{"17}
  \DeclareMathSymbol{\xi}{\mathalpha}{greek}{"18}
  \DeclareMathSymbol{\pi}{\mathalpha}{greek}{"19}
  \DeclareMathSymbol{\rho}{\mathalpha}{greek}{"1A}
  \DeclareMathSymbol{\sigma}{\mathalpha}{greek}{"1B}
  \DeclareMathSymbol{\tau}{\mathalpha}{greek}{"1C}
  \DeclareMathSymbol{\upsilon}{\mathalpha}{greek}{"1D}
  \DeclareMathSymbol{\phi}{\mathalpha}{greek}{"1E}
  \DeclareMathSymbol{\chi}{\mathalpha}{greek}{"1F}
  \DeclareMathSymbol{\psi}{\mathalpha}{greek}{"20}
  \DeclareMathSymbol{\omega}{\mathalpha}{greek}{"21}
  \DeclareMathSymbol{\varepsilon}{\mathalpha}{greek}{"22}
  \DeclareMathSymbol{\vartheta}{\mathalpha}{greek}{"23}
  \DeclareMathSymbol{\varpi}{\mathalpha}{greek}{"24}
  \DeclareMathSymbol{\varrho}{\mathalpha}{greek}{"25}
  \DeclareMathSymbol{\varsigma}{\mathalpha}{greek}{"26}
  \DeclareMathSymbol{\varphi}{\mathalpha}{greek}{"27}
  \DeclareSymbolFont{otone}{OT1}{cmr}{m}{n}
  \DeclareMathSymbol{\Gamma}{\mathalpha}{otone}{0}
  \DeclareMathSymbol{\Delta}{\mathalpha}{otone}{1}
  \DeclareMathSymbol{\Theta}{\mathalpha}{otone}{2}
  \DeclareMathSymbol{\Lambda}{\mathalpha}{otone}{3}
  \DeclareMathSymbol{\Xi}{\mathalpha}{otone}{4}
  \DeclareMathSymbol{\Pi}{\mathalpha}{otone}{5}
  \DeclareMathSymbol{\Sigma}{\mathalpha}{otone}{6}
  \DeclareMathSymbol{\Upsilon}{\mathalpha}{otone}{7}
  \DeclareMathSymbol{\Phi}{\mathalpha}{otone}{8}
  \DeclareMathSymbol{\Psi}{\mathalpha}{otone}{9}
  \DeclareMathSymbol{\Omega}{\mathalpha}{otone}{10}
  \DeclareSymbolFont{syms}{OML}{cmm}{m}{it}
  \DeclareMathSymbol{\partial}{\mathord}{syms}{"40}
  \DeclareMathAlphabet{\mathbold}{OML}{cmm}{b}{it}
  \DeclareSymbolFont{largesymbols}{OMX}{cmex}{m}{n}
\tikzset{
    -Latex,auto,node distance =1 cm and 1 cm,semithick,
    state/.style ={ellipse, draw, minimum width = 0.7 cm},
    point/.style = {circle, draw, inner sep=0.04cm,fill,node contents={}},
    bidirected/.style={Latex-Latex,dashed},
    el/.style = {inner sep=2pt, align=left, sloped}
}
\newcommand{\bA}{\mathbf{A}}
\newcommand{\bE}{\mathbf{E}}
\newcommand{\bN}{\mathbf{N}}
\newcommand{\be}{\mathbf{e}}
\newcommand{\bn}{\mathbf{n}}
\newcommand{\bW}{\mathbf{W}}
\newcommand{\bX}{\mathbf{X}}
\newcommand{\bY}{\mathbf{Y}}
\newcommand{\bZ}{\mathbf{Z}}
\newcommand{\bF}{\mathbf{F}}
\newcommand{\bI}{\mathbf{I}}
\newcommand{\bi}{\mathbf{i}}
\newcommand{\bPA}{\mathbf{PA}}
\DeclareMathOperator{\DO}{do}
\DeclareMathOperator{\OD}{OD}
\DeclareMathOperator{\ID}{ID}
\DeclareMathOperator{\CD}{CD}
\DeclareMathOperator{\SID}{SID}
\DeclareMathOperator{\SHD}{SHD}
\newcommand\independent{\protect\mathpalette{\protect\independenT}{\perp}}
\def\independenT#1#2{\mathrel{\rlap{$#1#2$}\mkern2mu{#1#2}}}
\newcommand{\newcite}[1]{\citeauthor{#1}\ \shortcite{#1}}
\newtheorem*{theorem*}{Theorem}
\title{A Ladder of Causal Distances}
\author{
Maxime Peyrard
\And
Robert West
\affiliations
EPFL
\emails
\{maxime.peyrard, robert.west\}@epfl.ch
}
\begin{document}

\maketitle

\begin{abstract}
Causal discovery, the task of automatically constructing a causal model from data, is of major significance across the sciences.
Evaluating the performance of causal discovery algorithms should ideally involve comparing the inferred models to ground-truth models available for benchmark datasets, which in turn requires a notion of distance between causal models.
While such distances have been proposed previously, they are limited by focusing on graphical properties of the causal models being compared.
Here, we overcome this limitation by defining distances derived from the causal distributions induced by the models, rather than exclusively from their graphical structure.
Pearl and Mackenzie [2018] have arranged the properties of causal models in a hierarchy called the ``ladder of causation'' spanning three rungs: observational, interventional, and counterfactual.
Following this organization, we introduce a hierarchy of three distances, one for each rung of the ladder.
Our definitions are intuitively appealing as well as efficient to compute approximately.
We put our causal distances to use by benchmarking standard causal discovery systems on both synthetic and real-world datasets for which ground-truth causal models are available.
% The proposed causal distances can have further applications in the field of causality.
% Finally, we highlight the usefulness of our causal distances by briefly discussing further applications beyond the evaluation of causal discovery techniques.
\end{abstract}
\section{Introduction}
\label{intro}
Reasoning about the causes and effects driving physical and societal phenomena is an important goal of science.
Causal reasoning facilitates the prediction of intervention outcomes and can ultimately lead to more principled policymaking~\cite{spirtes2000causation,PearlMackenzie18}.

%% Ladder of causation
Given a causal model, reasoning about cause and effect corresponds to formulating causal queries, which have been organized by \newcite{PearlMackenzie18} in a three-level hierarchy termed the ``ladder of causation'':
\emph{observational} queries correspond to seeing and observing; \emph{interventional} queries correspond to acting and intervening; and \emph{counterfactual} queries correspond to imagining, reasoning, and understanding.

% \begin{enumerate}
%     \item \emph{Observational} queries: seeing and observing. What can we tell about $Y$ if we observe $X=x$?
%     \item \emph{Interventional} queries: acting and intervening. What can we tell about $Y$ if we do $X=x$?
%     \item \emph{Counterfactual} queries: imagining, reasoning, and understanding. Given that $E=e$ actually happened, what would have happened to $Y$ had we done $X=x$?
% \end{enumerate}

Asking such questions requires a causal model to begin with.
Inferring a causal model from observational, interventional, or mixed data is the problem called \emph{causal discovery}.
Much of science is concerned with causal discovery, and automating the task has been receiving increased attention in the machine learning community~\cite{Peters2017}, where causal models have become the tool of choice for tackling important problems such as transfer learning, generalization beyond spurious correlations~\cite{RojSchTurPet18}, and algorithmic fairness~\cite{NIPS2017_6995}.
% , and interpretability~\cite{Lipton:2018}.

Evaluating causal discovery algorithms requires comparing the inferred causal models to ground-truth models on benchmark datasets, which in turn requires a notion of distance between causal models.
When defining such a distance, it is not sufficient to rely on tools developed for comparing standard generative models, such as goodness of fit, as these tools only operate on the first, observational level of the ladder of causation. Remarkably little research has been done on the topic of evaluating and comparing arbitrary causal models higher up the ladder. Existing works focus on specific aspects, such as the outcome of a limited number of interventions manually selected in advance~\cite{singh2017comparative} or the graph structure of the models being compared~\cite{SID}.
The latter work, which proposed the \emph{structural intervention distance} ($\SID$), one of the most prominent causal distance measures, further assumes that the two models have an identical observational joint distribution.
Unfortunately this assumption rarely holds in practice, and we show that even if the observational joint distributions are just slightly different, $\SID$ cannot be trusted. Furthermore, previous causal distances do not cover the counterfactual level.

% In this work, we address the problem of evaluating the causal properties of the learned causal models in comparison to reference causal models.

%%%%%%%%%%%%%%%%%%%%%%%%%

%% Open problem:
% An open problem hindering progress concerns the ability to compare two causal models with respect to their answers to causal queries, e.g., comparing the learned causal model and the reference causal model.
% To evaluate the causal properties of the learned models, it is not sufficient to rely on tools developed to compare standard generative models like goodness of fit because they only refer to the first level of the ladder. Surprisingly, little research has been done on the topic of evaluating and comparing arbitrary causal models higher up the ladder. Existing works focus on specific aspects like comparing only the causal graphs~\cite{SID} or the outcome of few interventions manually selected in advance~\cite{singh2017comparative}. For instance, the most relevant metric to evaluate causal discovery techniques is the \emph{structural intervention distance} ($\SID$), which counts the number of interventional distributions that differ between two causal models under the assumption that they agree on the underlying observational joint distribution. Unfortunately, this assumption rarely holds in practice, and we show that even if the joint distributions are just slightly different, $\SID$ cannot be trusted. Furthermore, $\SID$ does not cover the counterfactual level.

%% Contribution
To close this gap, we introduce three distances (\Secref{distances}), one for each rung of the ladder of causation. Our distances measure the difference between causal models for each type of causal query (observational, interventional, counterfactual).
% on a continuous scale.
Each distance builds upon the distance one level below, thus mirroring the hierarchy of the ladder of causation. We highlight theoretical properties of the distances in relation to previously proposed distances (\Secref{properties}).
% and discuss how to efficiently approximate them in practice (\Secref{approximation}). 
Then, we study their behavior in a series of experiments and put them to use in evaluating existing causal discovery systems (\Secref{experiments}). 
% We conclude with a discussion of implications and further applications (\Secref{future_work}).
Code for reproducing our experiments\footnote{ \url{https://github.com/epfl-dlab/causal-distances}}
and an extended version of the paper (with added appendices)%
\footnote{
\label{fn:app}
\url{https://arxiv.org/abs/2005.02480}}
are available online.

\section{Preliminaries}
\label{preliminaries}
\subsection{Causal Graphs}
We consider a finite ordered set of random variables $\bX = \{X_1, \dots, X_d\}$.
A directed graph $\mathcal{G} = (\bX, \bA)$ consists of the set of indexed nodes $\bX$ together with a set of directed edges $\bA \subseteq \bX \times \bX$. If $(X_i,X_j) \in \bA$, we say that $X_i$ is a \emph{parent} of $X_j$ and denote the set of all parents of $X_j$ with $\bPA_j$.
If $\mathcal{G}$ contains no directed cycle it is called a \emph{directed acyclic graph} (DAG).

DAGs are often used to encode causal assumptions by viewing an edge $(X_i,X_j)$ as the statement ``$X_i$ is a direct cause of $X_j$''~\cite{Pearl:2009}. A graph associated with such causal interpretation is called a \emph{causal graph}.

\begin{figure*}[t!]
     \centering
     \begin{subfigure}[t]{0.32\textwidth}
         \centering
            \begin{tikzpicture}[node distance =0.95 cm and 0.95 cm]
                \node (A) [label = above right:A, point];
                \node (B) [label = left:B, below left = of A, point];
                \node (C) [label = right:C, point, below right = of A];
                %\node (D) [label = below:D, point, below right = of B];
                
                \node (NA) [label = above:$N_A$, above = of A, yshift=-6mm, xshift=0mm, point];
                \node (NB) [label = above:$N_B$, above = of B, yshift=-6mm, xshift=0mm, point];
                \node (NC) [label = above:$N_C$, above = of C, yshift=-6mm, xshift=0mm, point];
                %\node (ND) [label = left:$N_D$, above left = of D, yshift=-6mm, xshift=6mm, point];
                %\node (NB) [label = left:B, below left = of A, point];
                %\node (NC) [label = right:C, point, below right = of A];
                %\node (ND) [label = below:D, point, below right = of B];
        
                \path (A) edge (B);
                \path (A) edge (C);
                \path (B) edge (C);
                %\path (C) edge (D);
                
                \path[dashed] (NA) edge (A);
                \path[dashed] (NB) edge (B);
                \path[dashed] (NC) edge (C);
               % \path[dashed] (ND) edge (D);
            \end{tikzpicture}
         \caption{$\mathfrak{C}$}
         \label{fig:example_scm}
     \end{subfigure}
     \hfill
     \begin{subfigure}[t]{0.32\textwidth}
         \centering
            \begin{tikzpicture}[node distance =0.95 cm and 0.95 cm]
                \node (A) [label = above right:A, point];
                \node (B) [label = left:b, below left = of A, point];
                \node (C) [label = right:C, point, below right = of A];
                %\node (D) [label = below:D, point, below right = of B];
                
                \node (NA) [label = above:$N_A$, above = of A, yshift=-6mm, xshift=0mm, point];
                %\node (NB) [label = above:$N_B$, above left = of B, yshift=-6mm, xshift=6mm, point];
                \node (NC) [label = above:$N_C$, above = of C, yshift=-6mm, xshift=0mm, point];
                %\node (ND) [label = left:$N_D$, above left = of D, yshift=-6mm, xshift=6mm, point];
                %\node (NB) [label = left:B, below left = of A, point];
                %\node (NC) [label = right:C, point, below right = of A];
                %\node (ND) [label = below:D, point, below right = of B];
        
                %\path (A) edge (B);
                \path (A) edge (C);
                \path (B) edge (C);
                %\path (C) edge (D);
                
                \path[dashed] (NA) edge (A);
                %\path[dashed] (NB) edge (B);
                \path[dashed] (NC) edge (C);
                %\path[dashed] (ND) edge (D);
            \end{tikzpicture}
         \caption{$\mathfrak{C};\DO(B=b)$}
         \label{fig:example_intervention}
     \end{subfigure}
     \hfill
     \begin{subfigure}[t]{0.32\textwidth}
         \centering
            \begin{tikzpicture}[node distance =0.95 cm and 0.95 cm]
                \node (A) [label = above right:A, point];
                \node (B) [label = left:b, below left = of A, point];
                \node (C) [label = right:C, point, below right = of A];
                %\node (D) [label = below:D, point, below right = of B];
                
                \node (NA) [label = above:$N_A|c$, above = of A, yshift=-6mm, xshift=0mm, point];
                %\node (NB) [label = above:$N_B$, above left = of B, yshift=-6mm, xshift=6mm, point];
                \node (NC) [label = above:$N_C|c$, above = of C, yshift=-6mm, xshift=0mm, point];
                %\node (ND) [label = left:$N_D|c$, above left = of D, yshift=-6mm, xshift=6mm, point];
                %\node (NB) [label = left:B, below left = of A, point];
                %\node (NC) [label = right:C, point, below right = of A];
                %\node (ND) [label = below:D, point, below right = of B];
        
                %\path (A) edge (B);
                \path (A) edge (C);
                \path (B) edge (C);
                %\path (C) edge (D);
                
                \path[dashed] (NA) edge (A);
                %\path[dashed] (NB) edge (B);
                \path[dashed] (NC) edge (C);
                %\path[dashed] (ND) edge (D);
            \end{tikzpicture}
         \caption{$\mathfrak{C}|C=c;\DO(B=b)$}
         \label{fig:example_counterfactual}
     \end{subfigure}
     \caption{\Figref{fig:example_scm} depicts an SCM over the variables $\bX = \{A,B,C\}$. Sampling the noise variables $\bN$ and following the structural assignments in topological order gives samples from the observational distribution.
     %$P_{\bX}^{\mathfrak{C}}$. 
     In \Figref{fig:example_intervention}, the intervention $\DO(B=b)$ replaces the structural assignment of $B$ by the hard value $b$. Samples from this modified SCM are samples from the interventional distribution. 
     %$P_{\bX}^{\mathfrak{C};\DO(B=b)}$. 
     \Figref{fig:example_counterfactual} asks \emph{what would have happened had we performed $\DO(B=b)$ given that we actually observed $C=c$?} This counterfactual is obtained by updating the noise distribution and then performing $\DO(B=b)$. Samples from this model are samples from the counterfactual distribution.
     %$P_{\bX}^{\mathfrak{C}|C=c;\DO(B=b)}$. 
     }
     \label{fig:example_ladder}
\end{figure*}
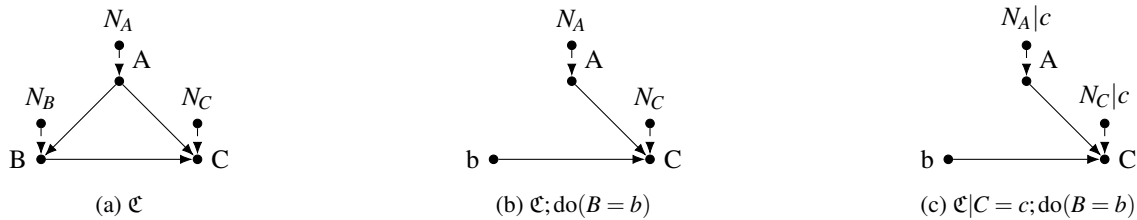

\subsection{Structural Causal Models (SCMs)}
A \emph{structural causal model} $\mathfrak{C}$ is a tuple $(\bX, \bN, \bF, P_{\bN})$, where $P_{\bN}$ is a \emph{noise distribution} over the (exogeneous) noise variables $\bN$ and $\bF = \{f_1, \dots, f_d\}$ is an ordered set of \emph{structural equations} indicating, for each $X_i \in \bX$, how its value is determined by its parents and noise:
\begin{equation}
\label{eq:struct_assign}
    X_i \coloneqq f_i(\bPA_i, N_i),
\end{equation}
%\begin{equation}
%\label{eq:struct_assign}
%X_i \coloneqq \begin{cases}
%N_i, \ & \text{if } \bPA_i = \emptyset, \\
%f_i(\bPA_i, N_i),  \ & \text{otherwise},
%\end{cases}
%\end{equation}
where $\bPA_i \subseteq \bX$, and $N_i$ is the noise variable associated with $X_i$.
The noise models variations due to ignored variables or inherent randomness. 
We assume that the noise variables are independent~\cite{Pearl:2009}.
% , \ie,
% \begin{equation}
%     P_\bN(n_1, \dots, n_d) = \prod\limits_{i=1}^d P_{N_i}(n_i).
% \end{equation}
The associated causal graph $\mathcal{G}$ is obtained by viewing each variable in $\bX$ as a vertex and drawing an arrow from each parent in $\bPA_i$ to $X_i$.
%In  the rest of the paper, we follow this assumption.

\xhdr{Assumptions}
Throughout the paper, we assume that all models satisfy four standard assumptions: the \emph{Markov property}, \emph{causal minimality}, \emph{causal faithfulness}, and \emph{positiveness} \cite{Peters2017}. 
% These are described in details in
% Appendix F.
% \Appref{app:proof-prelim}.

\subsection{Observational, Interventional, and Counterfactual Distributions}
\label{sec:obs-int-count}

\Figref{fig:example_ladder} is a graphical illustration of a three-variable SCM with queries about the observational, interventional, and counterfactual distributions. We define these distributions next.

\xhdr{Observational}
A causal model $\mathfrak{C}$ entails a unique joint distribution of $\bX = (X_1, \dots, X_d)$ called the \emph{observational distribution} and noted $P^{\mathfrak{C}}_{\bX}$~\cite{Peters2017}. To sample from $\mathfrak{C}$, we can simply sample from the noise distribution $P_{\bN}$ and use the structural assignments (\cf\ \Eqnref{eq:struct_assign}) following the topological order of $\bX$ in the causal graph $\mathcal{G}$.

\xhdr{Interventional}
%A causal model also entails interventional and counterfactual distributions.
An intervention on the set of variables $\bI \subset \bX$ of the causal model $\mathfrak{C}$ consists of replacing the structural assignments (\cf\ \Eqnref{eq:struct_assign}) of variables in $\bI$ by forcing them to specific values $\bI=\bi$, a so-called \emph{hard intervention}.\footnote{For simplicity, we focus on hard intervention. However, the approach can easily be extended to soft interventions.}
The new causal model obtained from $\mathfrak{C}$ via the intervention $\bI = \bi$ is denoted by $\mathfrak{C}; \DO(\bI = \bi)$~\cite{Pearl:2009}. Graphically, the \emph{interventional model} is obtained by removing all incoming edges to the nodes in $\bI$. After sampling the noise and following the new structural assignments, we obtain samples from the \emph{interventional distribution} of $\bX$, denoted by $P^{\mathfrak{C}; \DO(\bI = \bi)}_{\bX}$. 
%The interventional distribution is the joint distribution of the modified causal model $\mathfrak{C}; \DO(\bI = \bi)$.

%More generally, we may draw $I$ according to a predefined probability distribution $P_{\bI}$. In this case, we use the notation $P^{\mathfrak{C}; \DO(\bI \sim P_{\bI})}_{\bX}$ for the interventional distribution.

%More generally, the choice of $\bi$ can also follow a predefined distribution $P_I$. For each value
%The new causal model obtained from $\mathfrak{C}$ by performing the intervention $\bI := \bi, \bi \sim P_I$ is noted $\mathfrak{C}; do(I \sim P_I)$~\cite{Peters2017}. It entails a new probability distribution on $\bX$ which we call the \emph{interventional distribution} and note $P^{\mathfrak{C}; do(I \sim P_I)}_X$. 

%The interventional distribution allows us to predict the outcome of the causal model after it has been intervened on by external forces. 
%In practice, this is important for decision making and policy change~\cite{PearlMackenzie18}.

\xhdr{Counterfactual}
At the counterfactual level, we first (partially) observe the causal model in some state $\bE = \be$, where $\bE \subseteq \bX$ is called the \emph{evidence set.} Then we ask: ``Given that $\bE=\be$ actually happened, what would have happened had we done the intervention $\bI = \bi$?'' This is different from the interventional level, where we only ask: ``In general, what happens if we do $\bI = \bi$?'' We now take into account the additional specific information provided by the evidence $\bE = \be$.

Consider the causal model $\mathfrak{C}$ with noise distribution $P_{\bN}$ for which we have some evidence $\bE = \be$. The \emph{counterfactual model} induced by $\mathfrak{C}$ and $\bE=\be$ is denoted by $\mathfrak{C}|\bE=\be$ and is identical to $\mathfrak{C}$ except for the noise distribution $P_{\bN|\bE=\be}$ which has been updated given the evidence using Bayes' rule~\cite{Pearl:2009}:
\begin{equation}
\label{eq:noise bayes}
    P_{\bN|\bE=\be}(\bn) = \frac{P_{\bE|\bN=\bn}(\be)}{P_{\bE}(\be)} P_{\bN}(\bn).
\end{equation}
%The original $P_{\bN}$ serves as the prior, and we compute the posterior distribution $P_{\bN|\bE=\be}$ given the evidence. 
The updated noise variables are not necessarily independent anymore. Note the difference in notation between the induced counterfactual model $\mathfrak{C}|\bE=\be$ and the induced interventional model $\mathfrak{C}; \DO(\bI=\bi)$. The former corresponds to updating the noise distribution, whereas the latter corresponds to modifying the structural assignments of variables $\bI$. 
%The notation $\mathfrak{C}|\bX=\bx$ emphasizes that the \emph{new} causal model is obtained from $\mathfrak{C}$ after it has been observed in some particular state $\bX=\bx$.

%Given some observation $\bX = \bx$, the counterfactual causal model $(\mathfrak{C}|X=x)$ is given by $ (\mathcal{G}, \bF, P_{N|X=x})$ where the noise distribution has been updated given the observation using Bayes formula~\cite{Pearl:2009}. The notation $\mathfrak{C}|X=x$ emphasizes that the \emph{new} causal model is obtained from $\mathfrak{C}$ after it has been observed in some particular state $X=x$. Intuitively, after observing the causal model in state $X=x$, we can infer what were the most likely values of the noise variables that lead to this state $P_{N|X=x}$. We then simply use this new information about the noise variables while keeping the rest of the causal model unchanged. 

A counterfactual query corresponds to an intervention $\DO(\bI=\bi)$ in the counterfactual model $\mathfrak{C}|\bE=\be$. 
Again, this intervention entails a distribution of $\bX$, called the \emph{counterfactual distribution} and denoted by $P^{\mathfrak{C}|\bE=\be; \DO(\bI=\bi)}_{\bX}$. 

\subsection{Metrics, Pseudometrics, and Premetrics}
\label{sec:metrics}
A \emph{metric} $d$ satisfies the four axioms of \emph{non\hyp negativity} ($d(x,y) \ge 0$), \emph{identity of indiscernibles} ($x = y \iff d(x,y) = 0$), \emph{symmetry} ($d(x,y) = d(y,x)$), and the \emph{triangle inequality} ($d(x,z) \leq d(x,y) + d(y,z)$).
% A \emph{pseudometric} relaxes the identity of indiscernibles such that $x=y \implies d(x,y) = 0$, but the implication does not necessarily hold in the opposite direction.
% A \emph{premetric} only satisfies non\hyp negativity and $x=y \implies d(x,y) = 0$.
The causal distances introduced in this paper are pseudometrics (\ie, they relax the identity of indiscernibles), whereas SID (\Secref{intro} and \ref{background}) is a premetric (\ie, only non-negativity and $x=y \implies d(x,y) = 0$ hold).
% We will see that they can be turned into proper metrics in the appropriate spaces.

\section{Related Work}
\label{background}
An important practical application of causal\hyp model distances is the evaluation of causal discovery techniques.
Ideally, one would like to compare an inferred causal model against a given ground-truth model for each type of causal query: observational, interventional, and counterfactual.

The comparison of observational distributions has been studied extensively in machine learning and statistics (\cf\ the overviews by \newcite{theis2015note}
and \newcite{Sriperumbudur:2010:HSE:1756006.1859901}), and distances between distributions have been used to evaluate causal discovery methods, typically by measuring the goodness of fit of the observational distribution induced by a model with respect to empirical samples from the true observational distribution (\cf\ \newcite{singh2017comparative} for an overview).
%These techniques are inherently different from our approach because they do not compare the learned model to the true one and do
Importantly, such methods are inherently limited to the observational level and cannot measure how well the inferred causal model performs at the interventional and counterfactual levels.

%However, in this work, we are interested in comparing two causal models, e.g., comparing a given \emph{gold} causal model to one that has been learned using empirical samples from the gold model. This section discusses existing works which focused on comparing two causal graphs.

%Causal Bayesian Networks can also encode assumptions similar to causal models. In particular, they entails a unique observational distribution and intervention distributions. However, they cannot answer counterfactual queries because the noise variable are not explicit. 
%Bayesian networks have been evaluated mostly based on their goodness of fit in comparison to empirical sample from the \emph{true} observational distribution.

These two levels have received relatively little attention, compared to the observational one.
We are not aware of any previously proposed causal\hyp model distance to consider the counterfactual level, and the few that consider the interventional level focus on a specific aspect of causal models: their causal graphs~\cite{de2009comparison}. For instance, the popular \emph{structural Hamming distance} ($\SHD$) \cite{Acid:2003} counts in how many edges the two input graphs differ.
%\begin{center}
%$\SHD(\mathcal{G},\mathcal{H}) = |\{(i,j) \in V^2| \ G$ and $H$ do not have the same type of edge between $i$  and $j \}|$
%\end{center}
\newcite{SID} argue that previous graph comparison metrics, including SHD, are not in line with the end goal of causal discovery, namely, predicting the outcome of interventions, and propose the \emph{structural intervention distance} ($\SID$), a premetric (\cf\ \Secref{sec:metrics}) that counts the number of pairwise interventional distributions on which two causal models (with graphs $\mathcal{G}$ and $\mathcal{H}$, respectively) disagree:
\begin{eqnarray}
\SID(\mathcal{G},\mathcal{H}) &= |\{(X_i,X_j) \in \bX^2 | \ P(X_i|\DO(X_j)) %\nonumber \\
 \text{ is falsely }\\
& \text{inferred in $\mathcal{H}$ with respect to $\mathcal{G}$}\}|. \label{eqn:SID}
\end{eqnarray}
Under the assumption that the causal models agree on an underlying observational distribution, \newcite{SID} show that the comparison of interventional distributions reduces to a purely graphical criterion.
In particular, when the graphs are the same, both $\SHD$ and $\SID$ are $0$, and $\SHD(\mathcal{G},\mathcal{H}) = 0$ implies $\SID(\mathcal{G},\mathcal{H}) = 0$.
% \cite{SID}.

% \newcite{singh2017comparative} also discuss evaluation methods that measure the performance of an inferred causal model with respect to some predefined causal effect: for fixed $X, Y \in \bX$, is $P^{\mathfrak{C};\DO(X=x)}_{Y}$ estimated correctly?
\newcite{NIPS2018_8155} compare two causal models, but only for the purpose of testing identity. This constitutes a special case of our interventional distance. Recently, \newcite{NIPS2019_9345} argued that causal discovery methods should be evaluated using interventional measures instead of structural ones such as $\SID$ and $\SHD$. The causal distances we introduce are interventional and counterfactual measures.

\subsection{Limitations of Related Work}
\label{ssec:limitations}
%$SHD$ is not a causal measure. In fact, it is possible to find two graphs such that $SID$ is $0$ while $SHD$ attains maximum value~\cite{SID}.
Even though $\SID$ is focused on interventional distributions, it assumes that the underlying observational distribution has been estimated correctly. In practice, this is usually not the case, since the estimation is done using finitely many noisy samples. In general, $\SID$ cannot provide useful answers when the causal models disagree at the observational level. In fact, even when the observational distribution is just slightly off, $\SID$ may still produce highly inaccurate results. 
%It does not approach the true answer when the distance between observational distributions approaches zero (without reaching it).

To illustrate this problem, consider two causal models $\mathfrak{C}_1, \mathfrak{C}_2$, each with two nodes $A,B$. Both models have the graph $A \to B$, with $A \sim \mathcal{N}(0, \sigma_A)$ and $B$'s noise $N_B \sim \mathcal{N}(0, \sigma_B)$:
\begin{align}
\mathfrak{C}_1: \ B &\coloneqq A + N_B, \ P^{\mathfrak{C}_1;\DO(A=a)}_B = \mathcal{N}(a, \sigma_B)\\
\mathfrak{C}_2: \ B &\coloneqq -A + N_B, \  P^{\mathfrak{C}_2;\DO(A=a)}_B = \mathcal{N}(-a, \sigma_B).
\end{align}
The two models predict different values for the intervention $\DO(A=a)$.
% Note that the two models predict different values for the intervention $\DO(A=a)$ for $a \neq 0$:
% \begin{align}
%  &P^{\mathfrak{C}_1;\DO(A=a)}_B = \mathcal{N}(a, \sigma_B), \\
%  &P^{\mathfrak{C}_2;\DO(A=a)}_B = \mathcal{N}(-a, \sigma_B).
% \end{align}
In a toy interpretation, $B$ could be the improvement in life expectancy, and $A$ the daily intake of some drug. Then these two models would give rise to opposite policies given the goal of maximizing life expectancy. This should be reflected by a large distance between the models, but in fact the opposite happens: since $\mathfrak{C}_1$ and $\mathfrak{C}_2$ share the causal graph $\mathcal{G}$, we have $\SHD(\mathcal{G}, \mathcal{G}) = \SID(\mathcal{G},\mathcal{G}) = 0$.

Strictly speaking, $\SID$ cannot even be applied in this case because the observational distributions are not identical.
If, however, $\sigma_A \ll \sigma_B$, the observational distributions become almost indistinguishable, and one might be tempted to apply $\SID$, obtaining a distance of 0 although the interventional distributions still give rise to opposite policies.
We provide more details about this problem and its resolution in Appendix A
% \Appref{app:case_study} 
(\cf\ footnote~\ref{fn:app}).

Another limitation of $\SID$ is that it is binary:
either two pairwise interventional distributions are the same or not (\cf\ \Eqnref{eqn:SID}). It does not quantify the difference. 
In fact, for practical applications, two slightly wrongly inferred interventional distributions might be preferable to one completely wrongly inferred distribution. Also, if one has prior knowledge about which interventions are more critical, one might want to reflect this in the evaluation.
Finally, $\SID$ and $\SHD$ cannot compare causal models at the counterfactual level as they ignore the structural equations and noise distributions (\cf\ \Eqnref{eq:struct_assign}). 
% In contrast, we now propose distances for comparing causal models at all rungs of the ladder of causation.

\section{Definition of Causal Distances}
\label{distances}
Let $\bX$ be a set of random variables, and $\mathfrak{C}_1, \mathfrak{C}_2$ two causal models defined over them. We now introduce natural formulations of distances at the observational, interventional, and counterfactual level. Intuitively, they build upon an underlying distance between probability distributions and mirror the hierarchical aspect of Pearl and Mackenzie's ladder \shortcite{PearlMackenzie18}. 
%The proofs of all theorems stated in this section can be found in \Appref{app:proofs}.

\subsection{Observational Distance (OD)}
Let $P^{\mathfrak{C}_1}_{\bX}, P^{\mathfrak{C}_2}_{\bX}$ be the observational distributions induced by $\mathfrak{C}_1, \mathfrak{C}_2$.
The \emph{observational distance} ($\OD$) is trivial and corresponds to choosing a distance $D$ between probability distributions:
%The intuitive way to define the \emph{observational distance} (OD) is by comparing the observational distributions. Trivially, it corresponds to choosing a distance between probability distributions.
%is to choose a distance $D$ on probability distributions:
\begin{equation}
\label{eqn:OD}
  \OD(\mathfrak{C}_1, \mathfrak{C}_2) = D \left(P^{\mathfrak{C}_1}_{\bX}, P^{\mathfrak{C}_2}_{\bX}\right).
\end{equation}
Example choices for $D$ include the Hellinger, total variation, or Wasserstein distance. 
%Alternatively, one may use divergences such as Kullback--Leibler (at the cost of losing the pseudometric properties, \cf\ \Secref{sec:metrics}).

%or statistical tests distinguishing whether the samples from the two causal models come from the same distribution.

%Then, OD inherits the properties of $D_{\mathbb{P}}$. In particular, if KL divergence is used then OD will not be symmetric.

%In general, we only get samples from $P^{\mathfrak{C}_i}_X,  i \in \{1,2\}$ and we estimate $OD$ by the sample distance.

\subsection{Interventional Distance (ID)}
 An intuitive way to compare two causal models $\mathfrak{C}_1, \mathfrak{C}_2$ at the interventional level is to compare all their interventional distributions.
 Let $I$ denote the node on which the intervention is performed and $\mu$ a distribution over nodes that weighs the interventions on each node. 
 %We touch upon multi-node interventions in \Secref{ssec:mutlinodes}.
 In the absence of such information, $\mu$ may be chosen as the uniform distribution. Then, the \emph{interventional distance} ($\ID$) is defined as
 \begin{align*}
&\ID(\mathfrak{C}_1, \mathfrak{C}_2) =
\mathbb{E}_{I \sim \mu} \mathbb{E}_{i \sim P_I} \left[ \OD(\mathfrak{C}_1; \DO(I=i), \mathfrak{C}_2; \DO(I=i) ) \right].
%\frac{1}{d+1}\sum\limits_{I \in \bX \cup \{\emptyset\}} \mu(I)
\end{align*}

%\begin{align*}
%&\ID(\mathfrak{C}_1, \mathfrak{C}_2)
%=
%\mathbb{E}_{I \sim \mu} \mathbb{E}_{i \sim P_I} \left[ \OD(\mathfrak{C}_1; \DO(I=i), %\mathfrak{C}_2; \DO(I=i) ) \right],
%\end{align*}
 
 By convention, we include the empty intervention $I=\emptyset$, which corresponds to the observational distance $\OD$.
 
 In words, $\ID$ is the expected deviation in the interventional distributions if we sample a node $I$ on which to intervene according to $\mu$ and sample its value according to $P_I$.
 
 The expectation $\mathbb{E}_{i \sim P_I}$ indicates that $I$'s values are drawn from the distribution $P_I$. For instance, $P_I$ can be uniform for discrete models or standard Gaussian for continuous models. We only enforce $P_I$ to be strictly positive for all possible values that $I$ can take.
 
 Variables in $\bX$ may have different value scales, such that the averaged distances may be dominated by a few variables. This can be fixed by co-opting the weights $\mu$ in order to normalize each variable. In the tool we release, we give the option to $z$-normalize each variable and compute a scale-invariant $\ID$.
 
 Note that $\mu$ can give weights of $0$ to some nodes, \eg, if they are unobservable to the causal discovery method. 

\subsection{Counterfactual Distance (CD)}
The natural way to compare models at the counterfactual level is to consider their interventional distance on all counterfactual models, i.e., the counterfactual models induced by all possible evidences. Let $E = e$ denote the observed evidence and $\nu$ a distribution over nodes that weighs the counterfactuals induced by observing each node. Similar to $\mu$, in the absence of further information, $\nu$ may be chosen to be uniform. 
%We also discuss multi-node evidence sets in \Secref{ssec:mutlinodes}.
%
%This would mean $2^d$ different evidence sets. However, we prove the counterfactual analog to \Thmref{th:single_node_intervention} (\cf\ \Appref{app:proofs}):
%
%\begin{restatable}{theorem}{singlecounter}
%\label{th:single-counter}
%    Two causal models agree on the outcome of every counterfactual query if and only if they agree on the outcome of every counterfactual query induced by a single-node evidence set.
%\end{restatable}
%
%Two causal models can be interventionally equivalent but still produce different answers for counterfactual queries~\cite{Pearl:2009}.
%Let $\omega_{\bX}$ be the set of all possible observations $\bX=\bx$. Then, $\mathfrak{C}_1|\bX = \bx$ and $\mathfrak{C}_2|\bX = \bx$ are the counterfactual models associated to $\mathfrak{C}_1$ and $\mathfrak{C}_2$ after observing $\bX=\bx$. 
Then, the \emph{counterfactual distance} ($\CD$) is defined as
\begin{align*}
&\CD(\mathfrak{C}_1, \mathfrak{C}_2) = 
\mathbb{E}_{E \sim \nu} \mathbb{E}_{e \sim P_{E}} [\ID(\mathfrak{C}_1|E=e, \mathfrak{C}_2|E=e)],
\end{align*}

%\begin{align*}
%\CD(\mathfrak{C}_1, \mathfrak{C}_2)
%=
%\mathbb{E}_{E \sim \nu} \mathbb{E}_{e \sim P_{E}} [\ID(\mathfrak{C}_1|E=e, \mathfrak{C}_2|E=e)],
%\end{align*}
%
%\begin{align}
%&CD(\mathfrak{C}_1, \mathfrak{C}_2) = \mathbb{E}_{\nu} \mathbb{E}_{e \sim \mathcal{U}(\Omega_{E})} [ID(\mathfrak{C}_1|E=e, \mathfrak{C}_2|E=e)] \\
%&= \sum\limits_{E \in \bX} \nu(E) \mathbb{E}_{e \sim \mathcal{U}(\Omega_{E})} [ID(\mathfrak{C}_1|E=e, \mathfrak{C}_2|E=e)] 
%\end{align}
By convention, we include the empty evidence $E=\emptyset$, which corresponds to the interventional distance $\ID$.

%where $\nu$ is a distribution over nodes that indicates how much weight should be put on the counterfactual model induced by observing node $E$. 
The expectation $\mathbb{E}_{e \sim P_E}$ indicates that $E$'s values are drawn from the distribution $P_E$. In the absence of information, $P_E$ may be uniform for discrete models or standard Gaussian for continuous models.

%By convention, we include the empty evidence $E=\emptyset$ whose interventional distance is $\ID$.

%Similar to $\mu$, it is uniform when no information is provided. The double expectation $\mathbb{E}_{\nu} \mathbb{E}_{e \sim U(\Omega_{E})}$ indicates that $CD$ is the averaged $ID$ over all counterfactual models induced by a single-node observation $E$ when the observed value $e$ is sampled uniformly from $\Omega_E$, the support of $E$. 

%\begin{align}
%&CD(\mathfrak{C}_1, \mathfrak{C}_2) = E_{U(\Omega_{\bX})}\left[ ID(\mathfrak{C}_1|\bE=\be, \mathfrak{C}_2|\bX=\bx) \right] \\
%& = E_{U(\Omega_{\bX})}\left[ E_{\mu} \left[D[ P^{\mathfrak{C}_1|\bX=\bx; do(\bI)}_{\bX}, P^{\mathfrak{C}_2|\bX=\bx; do(\bI)}_X ] \right] \right]
%
%\end{align}
%The observations $\bx$ are sampled uniformly on the support $\Omega_X$ of $\bX$. Then, $CD$ corresponds to computing the expected intervention distance $ID$ over all counterfactual causal models. Each counterfactual $\mathfrak{C}_1|\bX=\bx$ model is induced by an observation $\bX=\bx$ and $\mathfrak{C}_1|\bX=\bx$ is obtained from $\mathfrak{C}_1$ by updating the noise distribution using Bayes rule.

\section{Basic Properties of Causal Distances}
\label{properties}
In this section, we assume that both $\mu$ and $\nu$ are uniform over the set of nodes. The proofs are given in Appendix F.
% \Appref{app:proofs}.

Each distance builds on top of the distance defined at the level below (CD on ID; ID on OD), thus reflecting the hierarchical structure of the ladder of causation. Furthermore, one can verify the following connections.
\begin{restatable}{theorem}{cdladder}
\label{th:cd_ladder}
For two causal models $\mathfrak{C}_1$ and $\mathfrak{C}_2$ over the variables $\bX$, we have:
\begin{align}
    &\ID(\mathfrak{C}_1, \mathfrak{C}_2) \leq (|\bX|+1)\CD(\mathfrak{C}_1, \mathfrak{C}_2) \\
    &\OD(\mathfrak{C}_1,\mathfrak{C}_2) \leq (|\bX|+1)\ID(\mathfrak{C}_1, \mathfrak{C}_2)
\end{align}
\end{restatable}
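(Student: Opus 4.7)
The plan is to exploit the recursive definitions directly. By the stated convention, when $\mu$ is uniform the expectation in $\ID$ is over the $|\bX|+1$ equally weighted choices consisting of the $|\bX|$ nodes plus the empty intervention $I=\emptyset$, and the empty intervention contributes exactly the term $\OD(\mathfrak{C}_1,\mathfrak{C}_2)$. Analogously, $\CD$ is an equally weighted average over $|\bX|+1$ values of $E$ (the nodes and the empty evidence), with the empty evidence contributing exactly $\ID(\mathfrak{C}_1,\mathfrak{C}_2)$.

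First, I would write $\ID$ in its fully expanded form, separating the empty-intervention summand from the rest:
\begin{equation*}
\ID(\mathfrak{C}_1,\mathfrak{C}_2) = \tfrac{1}{|\bX|+1}\,\OD(\mathfrak{C}_1,\mathfrak{C}_2) + \tfrac{1}{|\bX|+1}\sum_{I\in\bX} \mathbb{E}_{i\sim P_I}\!\left[\OD(\mathfrak{C}_1;\DO(I{=}i), \mathfrak{C}_2;\DO(I{=}i))\right].
\end{equation*}
Because $\OD$ is non\hyp negative (inherited from the underlying distributional distance $D$), every term in the sum is non\hyp negative, so dropping them yields $\ID(\mathfrak{C}_1,\mathfrak{C}_2) \ge \tfrac{1}{|\bX|+1}\OD(\mathfrak{C}_1,\mathfrak{C}_2)$. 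Rearranging gives the second implication immediately.

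The first implication is the same argument one level up. I would write
\begin{equation*}
\CD(\mathfrak{C}_1,\mathfrak{C}_2) = \tfrac{1}{|\bX|+1}\,\ID(\mathfrak{C}_1,\mathfrak{C}_2) + \tfrac{1}{|\bX|+1}\sum_{E\in\bX} \mathbb{E}_{e\sim P_E}\!\left[\ID(\mathfrak{C}_1|E{=}e,\mathfrak{C}_2|E{=}e)\right],
\end{equation*}
observe that $\ID$ is non\hyp negative (being an expectation of non\hyp negative $\OD$ values), and conclude $\CD(\mathfrak{C}_1,\mathfrak{C}_2) \ge \tfrac{1}{|\bX|+1}\ID(\mathfrak{C}_1,\mathfrak{C}_2)$, which rearranges to the claimed bound.

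There is essentially no hard step: the proof is a two\hyp line monotonicity argument that uses only the convention that the empty intervention/evidence is included in the uniform average, plus non\hyp negativity of the underlying probability\hyp distribution distance $D$. The only point worth flagging is to be explicit that ``$\mu$ uniform over nodes'' together with the stated convention means uniform over the augmented set of $|\bX|+1$ options, which is what produces the factor $|\bX|+1$ (rather than $|\bX|$) in the bound; this should be stated at the outset of the proof to justify the weight $1/(|\bX|+1)$ on the single recovered lower\hyp level term.
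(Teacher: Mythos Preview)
Your proposal is correct and follows essentially the same approach as the paper's own proof: decompose $\ID$ (resp.\ $\CD$) into the empty\hyp intervention (resp.\ empty\hyp evidence) term plus a non\hyp negative remainder, and use non\hyp negativity of $D$ to bound the lower\hyp level distance by $(|\bX|+1)$ times the higher\hyp level one. Your explicit remark that the uniform $\mu,\nu$ range over the augmented set of $|\bX|+1$ options is a helpful clarification that the paper leaves implicit.
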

In particular, counterfactual equivalence implies interventional equivalence, which in turn implies observational equivalence.
% (corresponding to the case $\epsilon = 0$).

\subsection{Connection with Graph-based Metrics}
The interventional distance ($\ID$) is related to the graph-based $\SID$ and $\SHD$ via
\begin{restatable}{theorem}{idsid}
\label{th:id_sid}
  For two causal models $\mathfrak{C}_1, \mathfrak{C}_2$ with causal graphs $\mathcal{G}_1, \mathcal{G}_2$,
    \begin{align}
        \label{eeq:implication_1}
       &\ID(\mathfrak{C}_1, \mathfrak{C}_2) = 0 \implies \SHD(\mathcal{G}_1, \mathcal{G}_2) = 0 \implies \SID(\mathcal{G}_1, \mathcal{G}_2) = 0.
    \end{align}
  The reverse directions do not hold in general.
\end{restatable}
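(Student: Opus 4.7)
The plan is to decompose the chain $\ID=0 \Rightarrow \mathcal{G}_1=\mathcal{G}_2 \Rightarrow \SHD(\mathcal{G}_1,\mathcal{G}_2)=0 \Rightarrow \SID(\mathcal{G}_1,\mathcal{G}_2)=0$ into its three links and handle them separately, since only the first link carries real content. Indeed, $\SHD(\mathcal{G}_1,\mathcal{G}_2)=0$ is literally the assertion that the edge sets agree, so $\mathcal{G}_1=\mathcal{G}_2 \Rightarrow \SHD=0$ is immediate; and $\SID$ as defined in \Eqnref{eqn:SID} compares pairwise interventional distributions derived from the graphs, so identical graphs yield a trivial comparison and hence $\SID=0$. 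I would dispatch both of these with one or two lines citing the relevant definitions.

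The heart of the argument is $\ID(\mathfrak{C}_1,\mathfrak{C}_2)=0 \Rightarrow \mathcal{G}_1=\mathcal{G}_2$. The first step is to observe that, because $\mu$ is uniform over the finite node set and each $P_I$ is strictly positive on its support, the nonnegative integrand $\OD(\mathfrak{C}_1;\DO(I=i),\mathfrak{C}_2;\DO(I=i))$ must vanish for every node $I$ (including the empty intervention, by convention) and every value $i$. Thus $P^{\mathfrak{C}_1}_{\bX}=P^{\mathfrak{C}_2}_{\bX}$ and all single-node interventional distributions coincide. From observational equivalence alone, the standing assumptions of Markov property, faithfulness and causal minimality imply that $P^{\mathfrak{C}_1}_{\bX}$ is faithful to both graphs, so the two graphs have the same set of conditional independences among $\bX$; by the classical Verma--Pearl characterization this forces $\mathcal{G}_1$ and $\mathcal{G}_2$ to share a skeleton and the same set of v-structures, i.e., to lie in the same Markov equivalence class.

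The remaining task is to use the single-node interventional equivalence to orient the non-v-structure edges. Suppose for contradiction that some edge is oriented as $X_i \to X_j$ in $\mathcal{G}_1$ but as $X_j \to X_i$ in $\mathcal{G}_2$. I apply $\DO(X_i = x_i)$ to both models. In $\mathcal{G}_1;\DO(X_i)$ the edge $X_i\to X_j$ is preserved (only incoming edges to $X_i$ are cut), so $X_j$ is a direct child of $X_i$; by causal minimality and faithfulness applied to the intervened SCM, the marginal $P^{\mathfrak{C}_1;\DO(X_i=x_i)}_{X_j}$ varies with $x_i$. In $\mathcal{G}_2;\DO(X_i)$ the edge $X_j\to X_i$ is removed, and because $\mathcal{G}_2$ is a DAG in which $X_j$ topologically precedes $X_i$, no ancestor of $X_j$ is a descendant of $X_i$; therefore $X_j$'s distribution in the intervened model equals its observational one and is independent of $x_i$. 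Equality of the two interventional distributions for every $x_i$ is then impossible, contradicting $\ID=0$.

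The main obstacle is the nontriviality claim inside $\mathfrak{C}_1;\DO(X_i)$: faithfulness as stated speaks about conditional independence in an observational distribution, not about whether an interventional marginal genuinely varies with the intervened value, so some care is required to argue (via faithfulness of the intervened SCM, or equivalently by replacing $X_i$ with an independently randomized input) that no cancellation across configurations of $X_j$'s other parents can wash out the direct edge $X_i\to X_j$. Finally, the failure of the reverse chain is easy to exhibit: the two-node Gaussian example from \Secref{background} has $\mathcal{G}_1=\mathcal{G}_2$ and hence $\SHD=\SID=0$ but $\ID\neq 0$, while any two distinct Markov-equivalent DAGs yield $\SID=0$ without $\mathcal{G}_1=\mathcal{G}_2$.
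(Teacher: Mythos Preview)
Your proposal is correct and follows essentially the same route as the paper: deduce $\OD=0$ from $\ID=0$, use faithfulness and Verma--Pearl to place $\mathcal{G}_1,\mathcal{G}_2$ in the same Markov equivalence class, and then use single-node interventions to force agreement on the remaining edge orientations; the last two implications and the counterexample are handled identically. The one substantive difference is how the ``no cancellation'' step is argued: you try to invoke faithfulness/minimality of the intervened SCM to guarantee that $P^{\mathfrak{C}_1;\DO(X_i=x_i)}_{X_j}$ genuinely varies with $x_i$, and you rightly flag this as the delicate point; the paper instead gives a purely graph-theoretic argument, observing that a cancelling directed path $X\to Z_1\to\dots\to Z_k\to Y$ in $\mathcal{G}_1$ is incompatible with the edge $X\!-\!Y$ being reversible inside the MEC (it would force a new v-structure), so no cancellation can occur and $X$ must have an effect on $Y$ in $\mathfrak{C}_1$, hence in $\mathfrak{C}_2$, hence $X$ is an ancestor of $Y$ in $\mathcal{G}_2$. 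The paper's structural argument sidesteps the need to reason about faithfulness of a modified SCM, which is the cleaner way to close the gap you identified.
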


A further connection between $\SID$ and our causal distances is given by
\begin{restatable}{theorem}{sididod}
\label{th:id_sid_od}
  Let $\mathfrak{C}_1, \mathfrak{C}_2$ be two causal models with causal graphs $\mathcal{G}_1, \mathcal{G}_2$. When $\OD(\mathfrak{C}_1, \mathfrak{C}_2) = 0$,
    \begin{align}
     \SID(\mathcal{G}_1, \mathcal{G}_2) = 0 \iff \ID(\mathfrak{C}_1, \mathfrak{C}_2) = 0.
    \end{align}
    When $\OD(\mathfrak{C}_1, \mathfrak{C}_2) \neq 0$ the equivalence does not hold.
\end{restatable}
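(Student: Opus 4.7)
The plan is to prove the biconditional under $\OD = 0$ by treating the two implications separately, and then give an explicit counterexample when $\OD \neq 0$. The reverse direction $(\ID(\mathfrak{C}_1, \mathfrak{C}_2) = 0 \Rightarrow \SID(\mathcal{G}_1, \mathcal{G}_2) = 0)$ is immediate: by Theorem~\ref{th:id_sid}, $\ID = 0$ already implies $\mathcal{G}_1 = \mathcal{G}_2$ and hence $\SID = 0$. More directly, $\ID = 0$ forces every joint interventional distribution $P^{\mathfrak{C}_k; \DO(X_j = x_j)}(\bX)$ to agree across $k \in \{1, 2\}$, and marginalizing over $\bX \setminus \{X_i\}$ yields agreement on all the pairwise interventional distributions counted by $\SID$.

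For the forward direction $(\SID = 0 \Rightarrow \ID = 0)$ under $\OD = 0$, my plan is to first derive the graphical conclusion $\mathcal{G}_1 = \mathcal{G}_2$ and then invoke uniqueness of the truncated factorization. Because $\OD = 0$ together with the Markov, faithfulness, and minimality assumptions places $\mathcal{G}_1$ and $\mathcal{G}_2$ in the same Markov equivalence class, they share skeleton and v-structures. Suppose toward contradiction that $\mathcal{G}_1 \neq \mathcal{G}_2$, so some edge $X \to Y$ in $\mathcal{G}_1$ is oriented as $Y \to X$ in $\mathcal{G}_2$. In $\mathcal{G}_1$, $X$ is a non-descendant of $Y$, hence $P^{\mathfrak{C}_1}(X \mid \DO(Y = y))$ equals the observational marginal $P(X)$ and does not vary with $y$; in $\mathcal{G}_2$, $X$ is a child of $Y$, and causal minimality together with faithfulness ensures that $P^{\mathfrak{C}_2}(X \mid \DO(Y = y))$ genuinely depends on $y$. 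The pair $(X, Y)$ then contributes to $\SID$, contradicting $\SID = 0$. Hence $\mathcal{G}_1 = \mathcal{G}_2$, and since the observational conditionals $P(X_i \mid \bPA_i)$ agree by $\OD = 0$, the truncated factorizations $\mathbb{I}[X_j = x_j] \prod_{i \neq j} P(X_i \mid \bPA_i)$ coincide for every single-node intervention, giving $\ID(\mathfrak{C}_1, \mathfrak{C}_2) = 0$.

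For the failure of the equivalence when $\OD \neq 0$, I reuse the two-node case study of Section~\ref{ssec:limitations}: the SCMs $\mathfrak{C}_1 \colon B := A + N_B$ and $\mathfrak{C}_2 \colon B := -A + N_B$ share the graph $A \to B$, so $\SID = 0$ trivially, yet they induce opposite-sign covariances between $A$ and $B$ (so $\OD \neq 0$) and opposite-sign interventional means $\mathcal{N}(\pm a, \sigma_B)$ (so $\ID \neq 0$), witnessing that $\SID = 0 \Rightarrow \ID = 0$ fails once $\OD = 0$ is dropped. The \textbf{main obstacle} is the edge-reversal step above: arguing carefully that any differing edge orientation between two Markov-equivalent DAGs must change the pairwise interventional marginal of one endpoint on the other. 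Chickering's result that Markov-equivalent DAGs are connected by covered-edge reversals offers a convenient framework, but the key ingredient is to combine minimality (to rule out vacuous direct dependencies) and faithfulness (to rule out cancellations through longer paths) so that the constant marginal $P(X)$ predicted by $\mathcal{G}_1$ cannot coincide with the nonconstant parent-to-child effect predicted by $\mathcal{G}_2$.
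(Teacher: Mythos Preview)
Your proposal is correct and follows essentially the same route as the paper: invoke \Thmref{th:id_sid} for the $\ID=0\Rightarrow\SID=0$ direction, use $\OD=0$ to place $\mathcal{G}_1,\mathcal{G}_2$ in the same Markov equivalence class and derive a contradiction from a differently oriented edge to obtain $\mathcal{G}_1=\mathcal{G}_2$ (whence the truncated factorizations coincide and $\ID=0$), and reuse the two-node case study for the $\OD\neq 0$ counterexample. The only cosmetic difference is that the paper intervenes on $X$ and rules out cancelling paths via a v-structure argument, whereas you intervene on $Y$ and appeal to minimality plus faithfulness; both are justifications of the same ``no cancellation'' sub-claim within an identical overall strategy.
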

From \Thmref{th:id_sid}, we know that $\ID$ being $0$ guarantees that $\SID$ is $0$. But $\SID$ being $0$ only ensures that $\ID$ is $0$ in the specific case where $\OD$ is also $0$.

\subsection{Hidden Variables}
Until now, we considered the comparison of two Markovian causal models, i.e., with no hidden confounders. 
We may ask what happens in the non-Markovian case, where one or both models have hidden confounders. 

If both models have hidden confounders that can be intervened on, we cannot bound the expected difference between two models, as the outcome of intervening on the hidden confounder can be made arbitrarily large, as shown by \Figref{fig:hidden}.

\begin{figure}
\centering
\begin{tikzpicture}[node distance =0.85 cm and 0.85 cm]
    \node (Z) [label = above:$Z$, point];
    \node (X) [label = below:$X$, below left = of Z, point];
    \node (Y) [label = below:$Y$, point, below right = of Z];
        
    \path[dashed] (Z) edge node[above, el] {$\lambda$} (X);
    \path[dashed] (Z) edge node[above, el] {$-\lambda$} (Y);
\end{tikzpicture}
\qquad
\qquad % <----------------- SPACE BETWEEN PICTURES
\begin{tikzpicture}[node distance =0.85 cm and 0.85 cm]
    \node (Z) [label = above:$Z$, point];
    \node (X) [label = below:$X$, below left = of Z, point];
    \node (Y) [label = below:$Y$, point, below right = of Z];
        
    \path (Z)[dashed] edge node[above, el] {$-\lambda$} (X);
    \path (Z)[dashed] edge node[above, el] {$\lambda$} (Y);
\end{tikzpicture}
\caption{$Z \sim \mathcal{N}(0,1)$ is a hidden confounder in both graphs. The edges indicate a multiplicative factor, e.g., $X = \lambda Z$ in the left graph. The two models have the same joint distribution on $(X,Y)$ and the same graph. Yet, $\DO(Z=z)$ for $z \neq 0$ results in two different joint distributions. Their (Wasserstein) distance can be made arbitrarily large by increasing $\lambda$.} \label{fig:hidden}
\end{figure}
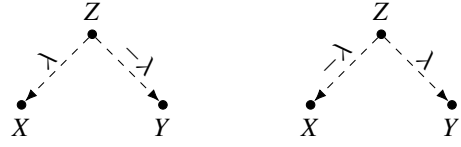

However, this constitutes a fairly peculiar scenario. Indeed, it is expected that comparing ``incomplete'' models can only give partial information.
% \footnote{
It is similar to trying to establish a distance between vectors where one dimension remains hidden.
% } 
In practice, if we wish to compare two causal models either (i) one is fully known (e.g., the gold standard model to which we compare a model inferred by a causal discovery technique with variables unobservable during training) or (ii) the hidden variables cannot be intervened on. In this latter case, $\OD$ and $\ID$ computed on the observed subset preserve their interpretation.  

%(ii) However, as long as no hidden confounder is intervened on, it remains possible to compute $\OD$ and $\ID$ on the observed set of nodes while preserving their interpretation on this subset. In particular, when one model is fully known (e.g., gold standard) while the other may have hidden confounders or simply a smaller subset of observed nodes (e.g., the outcome of a causal discovery algorithm trained with hidden variables). Then, $\OD$ and $\ID$ computed on the observed subset of nodes provides the evaluation of the partial causal model.

%In general, counterfactuals cannot be computed with semi-Markovian models as we cannot update the noise corresponding to hidden nodes under the evidence. In such case, $\CD$ is not defined.

\subsection{Practical Implementation} 
In Appendix D, we describe the practical details related to the implementation and efficient estimation of the causal distances. We also discuss how to handle both continuous and discrete variables. This implementation results in a tool that we make publicly available to the community.

% \subsection{Pseudo-metrics}
% % \vspace{4pt}
% Technically, $\OD$, $\ID$, and $\CD$ are not metrics, since the identity of indiscernibles does not hold: there can be two distinct causal models with an $\OD$, $\ID$, or $\CD$ of $0$. If $D$ satisfies the other metric axioms (but not otherwise, e.g., if $D$ is the asymmetric KL\hyp divergence; \Secref{sec:metrics}), our distances are pseudometrics.

% Like all pseudometrics, however, they can be turned into proper metrics by considering equivalence classes as the objects of comparison, where the equivalence class of a model is the set of all models to which it has distance $0$. Interestingly, these equivalence classes are tightly connected to problems of identifiability \cite{Pearl:2009}.

\section{Experiments}
\label{experiments}
% We now conduct a wide range of experiments.
We now conduct experiments with the causal distances, on both synthetic and real-world causal models.
In all experiments, we use the sample Wasserstein distance~\cite{villani2008optimal} as the underlying distance $D$ between probability distributions (\cf\ \Eqnref{eqn:OD}).
%, as it does not require binning and has good convergence properties
In Appendix B, 
% \Appref{app:efficiency}
 we validate the sample efficiency and sensitivity of the causal distances.

\begin{figure}
    \centering
    % \captionsetup{justification=centering}
    % \begin{subfigure}[t]{0.31\columnwidth}
    %      \centering
    %     %  \captionsetup{justification=centering}
    %      \includegraphics[width=\textwidth]{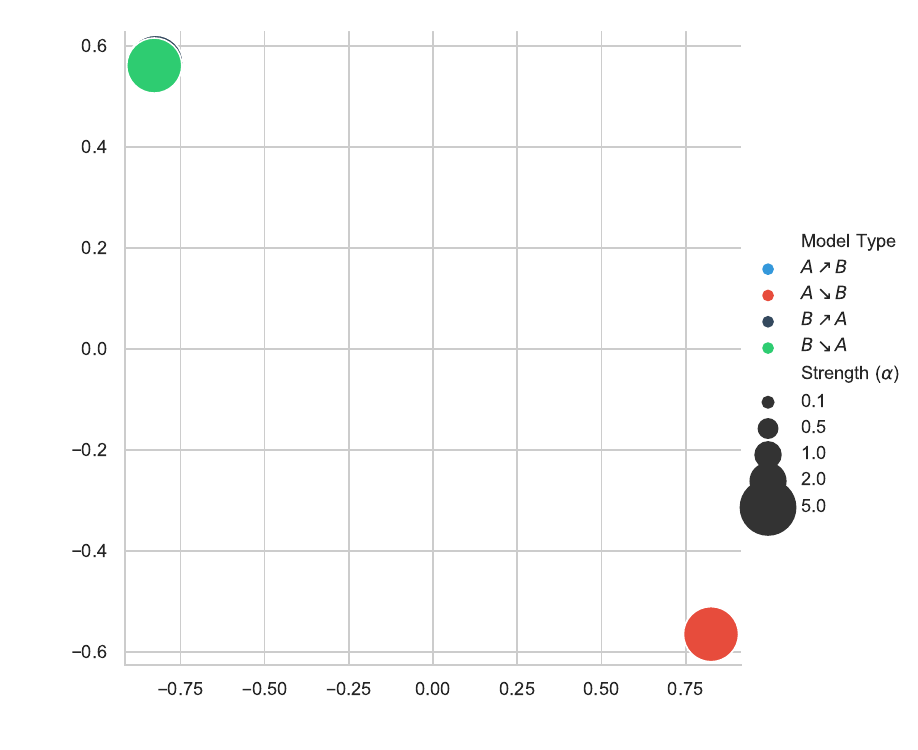}
    %      \caption{MDS with $\SID$.}
    %      \label{fig:geom_sid}
    %  \end{subfigure}
    %  \hfill
     \begin{subfigure}[t]{0.47\columnwidth}
         \centering
        %  \captionsetup{justification=centering}
         \includegraphics[width=\textwidth]{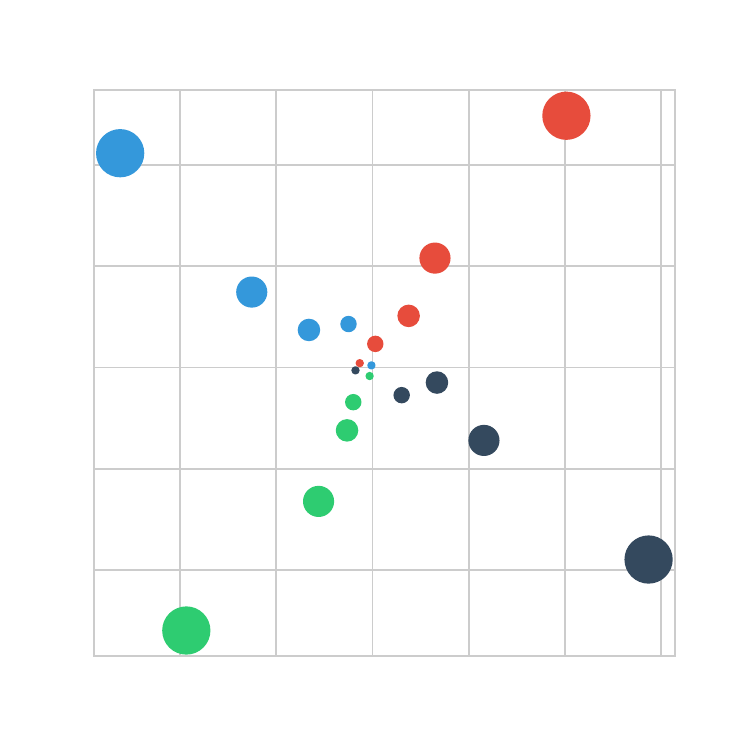}
         \caption{MDS with $\ID$.}
         \label{fig:geom_id}
     \end{subfigure}
     \hfill
    \begin{subfigure}[t]{0.47\columnwidth}
         \centering
        %  \captionsetup{justification=centering}
         \includegraphics[width=\textwidth]{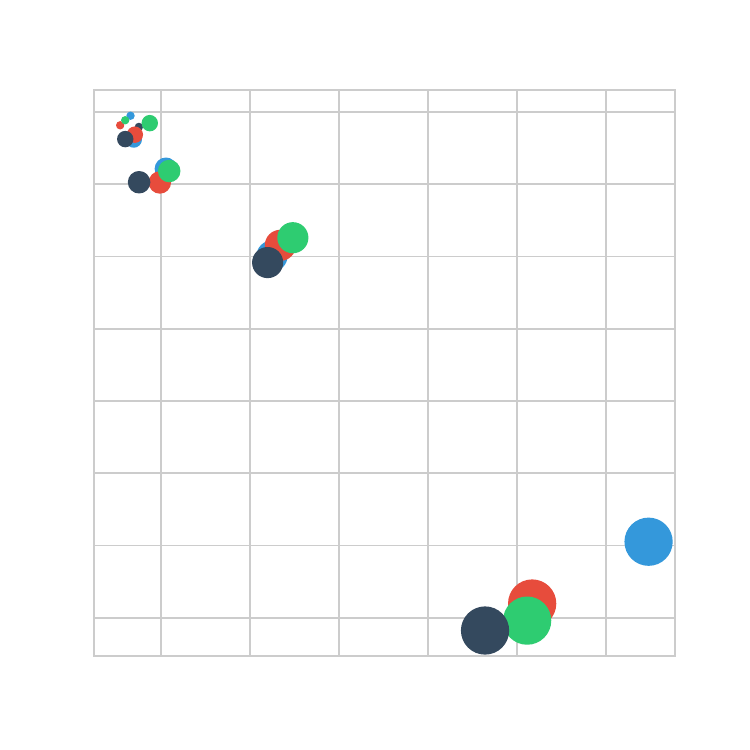}
         \caption{MDS with $\OD$.}
         \label{fig:geom_od}
     \end{subfigure}
    \caption{Comparison of multidimensional scaling (MDS) embeddings induced by $\OD$ and $\ID$ using the four causal models described in \Secref{sec:geometry} (one color per model; size represents the strength of $\beta$).}
    %With $\SID$, all models would collapse on one of two points, one for the graph $A \to B$ and one for the graph $B \to A$.}
    %\label{fig:sid-id-od}
\end{figure}

\subsection{Geometry of Causal Distances}
\label{sec:geometry}
First, we illustrate the intuitive geometry induced by our causal distances. In particular, we consider simple models with two nodes $A$ and $B$ using linear structural equations and Gaussian noise. We let $\beta > 0$ denote the strength of the causal connection, let $N \sim \mathcal{N}(0,1)$ be $B$'s noise, and consider four types of models:
\begin{eqnarray*}
    % A \nearrow B&:& 
    % A [\nearrow, \searrow] B&:
    & A \sim \mathcal{N}(0,1) \text{ and } B \coloneqq \pm \beta A + N,\\
    % A \searrow B&:& A \sim \mathcal{N}(0,1) \text{ and } B \coloneqq -\beta A + N,\\
    % B [\nearrow, \searrow] A&:
    & B \sim \mathcal{N}(0,1) \text{ and } A \coloneqq \pm \beta B + N,
    % B \searrow A&:& B \sim \mathcal{N}(0,1) \text{ and } A \coloneqq -\beta B + N.
\end{eqnarray*}
We construct five models of each type with $\beta =0.1, 0.5, 1, 2, 5$, respectively, resulting in $20$ models overall.
We then compute the pairwise distances between all models using $\ID$ and apply multidimensional scaling to obtain 2D embeddings of all models. The result is depicted in \Figref{fig:geom_id} and exhibits the geometrical structure induced by $\ID$, where each type of model creates its own branch, and larger values of $\beta$ push the different types further apart. When $\beta \to 0$, all models converge to a model where $A$ and $B$ are causally disconnected. When viewed in 3D, equal angles would separate all pairs of branches.
% \footnote{Visualization will be available in the accompanying Jupyter notebook.}

In contrast, $\SID$ %depicted in \Figref{fig:geom_sid} 
induces a much poorer geometry where each model is projected on one of two points: one representing the graph $A \to B$, the other, the graph $B \to A$.
With $\OD$, shown in \Figref{fig:geom_od}, the models form one branch in the 2D embedding. They are only distinguished based on the amplitude of $\beta$; neither the sign nor the orientation of the graph are captured.

\begin{figure}
    \centering
    \begin{subfigure}[t]{0.48\columnwidth}
         \centering
        %  \captionsetup{justification=centering}
         \includegraphics[width=\textwidth]{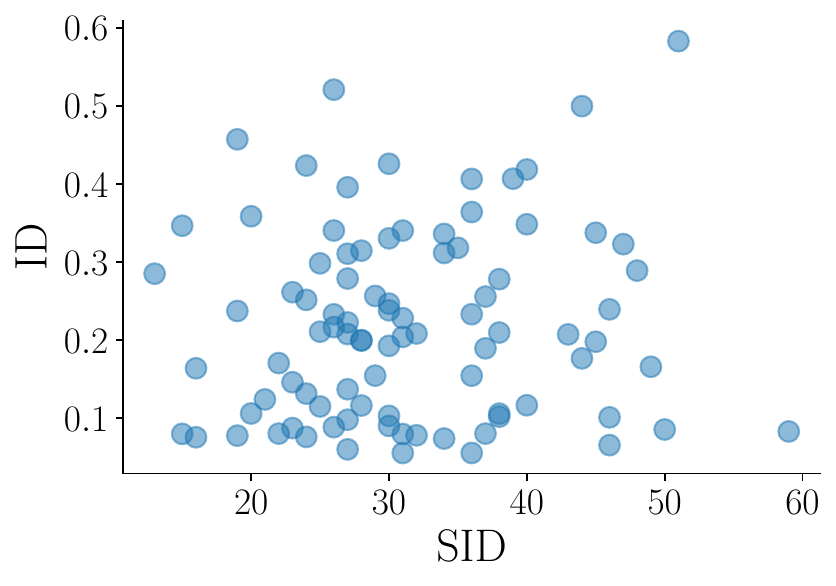}
         \caption{$\SID$ vs. $\ID$.}
         \label{fig:sid_id}
     \end{subfigure}
     \hfill
    %  \begin{subfigure}[t]{0.23\columnwidth}
    %      \centering
    %     %  \captionsetup{justification=centering}
    %      \includegraphics[width=\textwidth]{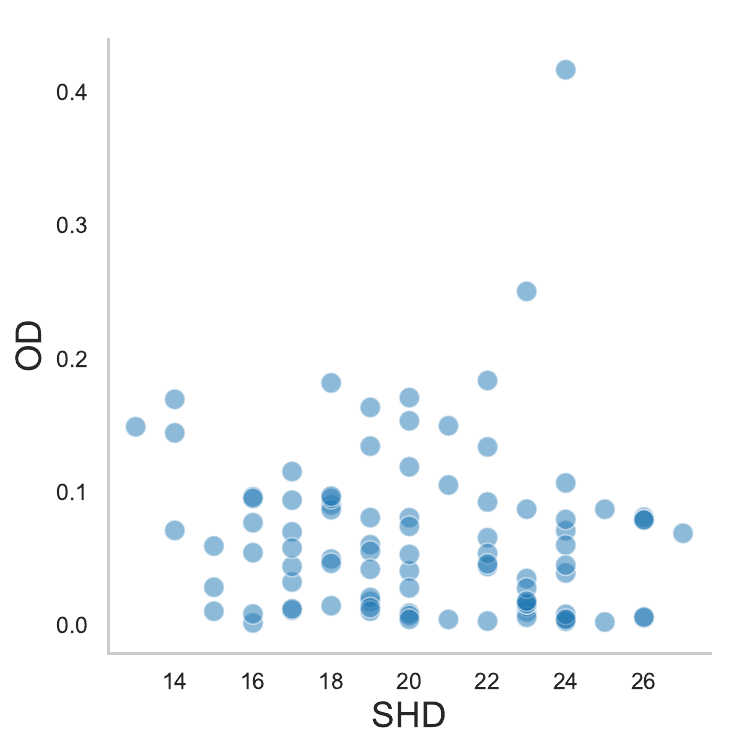}
    %      \caption{Scatter-plot comparing $\SHD$ and $\OD$.}
    %      \label{fig:shd_od}
    %  \end{subfigure}
    %  \hfill
    \begin{subfigure}[t]{0.48\columnwidth}
         \centering
        %  \captionsetup{justification=centering}
         \includegraphics[width=\textwidth]{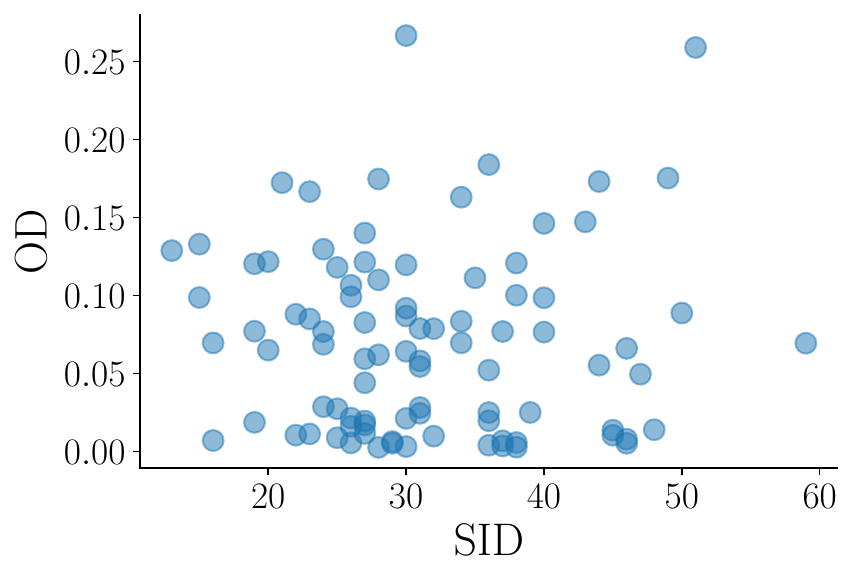}
         \caption{$\SID$ vs. $\OD$.}
         \label{fig:sid_od}
     \end{subfigure}
    %  \hfill
    %  \begin{subfigure}[t]{0.23\columnwidth}
    %      \centering
    %     %  \captionsetup{justification=centering}
    %      \includegraphics[width=\textwidth]{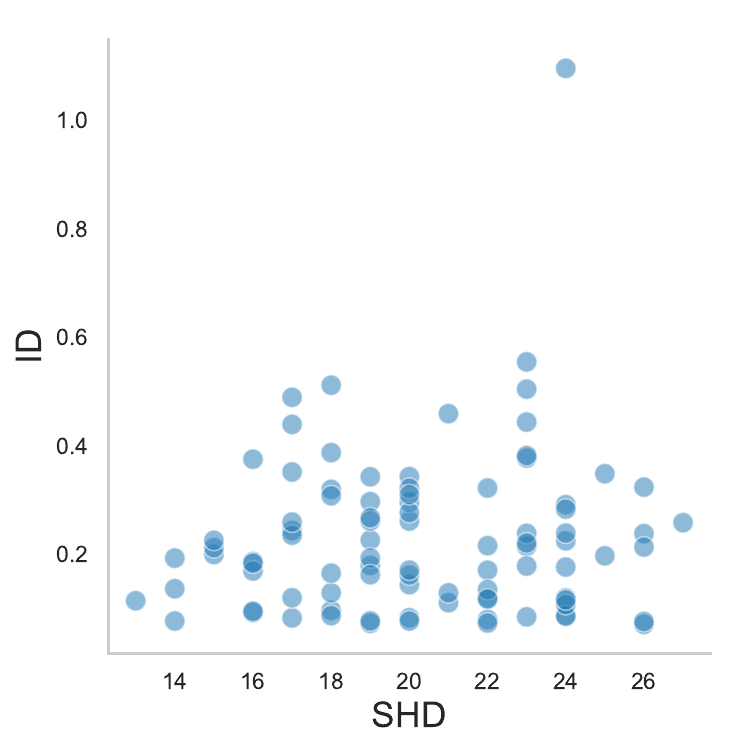}
    %      \caption{Scatter-plot comparing $\SHD$ and $\OD$.}
    %      \label{fig:shd_id}
    %  \end{subfigure}
    %\includegraphics[width=0.68\linewidth]{images/OD_ID_SID.pdf}
    \caption{Comparisons between ${\ID}$ and ${\OD}$ against $\SID$ on 90 randomly sampled pairs of causal models.}
    %\label{fig:sid-id-od}
\end{figure}

\subsection{Comparison of Causal Distances and SID}
\label{sec:comparison}
%There are several different metrics that one can derive to compare causal models depending of which property one aim to measure. We saw $SID$ as a particularly relevant example concerning causal discovery. In this experiment, we show that $SID$ and SID indeed produce different results and can be used in complement. While SID focuses on the graph structure, $ID$ focuses on the actual distances between the intervention distributions.

Whereas \Thmref{th:id_sid_od} connects $\ID$, $\OD$, and $\SID$ when two of these quantities are $0$, we now empirically investigate their relationship when they deviate from $0$.
\Figref{fig:sid_id} shows a scatter plot comparing  ${\ID}$ and $\SID$, where each dot is a pair of random causal models. 
% from the parametrizations defined above (linGauss, linNGauss, GPAddit, GP, Discrete).
%Except when $\tilde{ID}\approx0$ and $SID\approx0$ (see theorem \ref{th:id_sid}), 
As we see, there is little correlation between ${\ID}$ and $\SID$. It is possible to find pairs of causal models with low ${\ID}$ but high $\SID$, and \textit{vice versa}.
% The same behaviour is observed when $\SID$ is replaced by $\SHD$, as depicted by \Figref{fig:shd_od} and \Figref{fig:shd_id}. 
% The observed correlation coefficients are not significant.
These results highlight how the different distances capture different aspects of the models being compared. 
% They should be considered complementary to one another.
%This indicates that outside of the precise case where the quantities are $0$, there is little correlation between the causal distance and $\SID$ 

%It is also interesting to observe some positive correlation between ${\OD}$ and ${\ID}$, which indicates that, given the causal assumptions (encoded by the causal model), there are connections between the different levels of the ladder of causation~\cite{Pearl:2019}. 

%For example, if the causal graph is known, then every intervention distribution can be identified, i.e., computed from the observational distribution.

%These results highlight how the different distances capture different aspects of the models being compared. They should be considered complementary to one another.

\subsection{Evaluation of Causal Discovery Systems}
\label{sec:discovery eval}

\begin{figure*}
    \centering
    \includegraphics[width=0.99\textwidth]{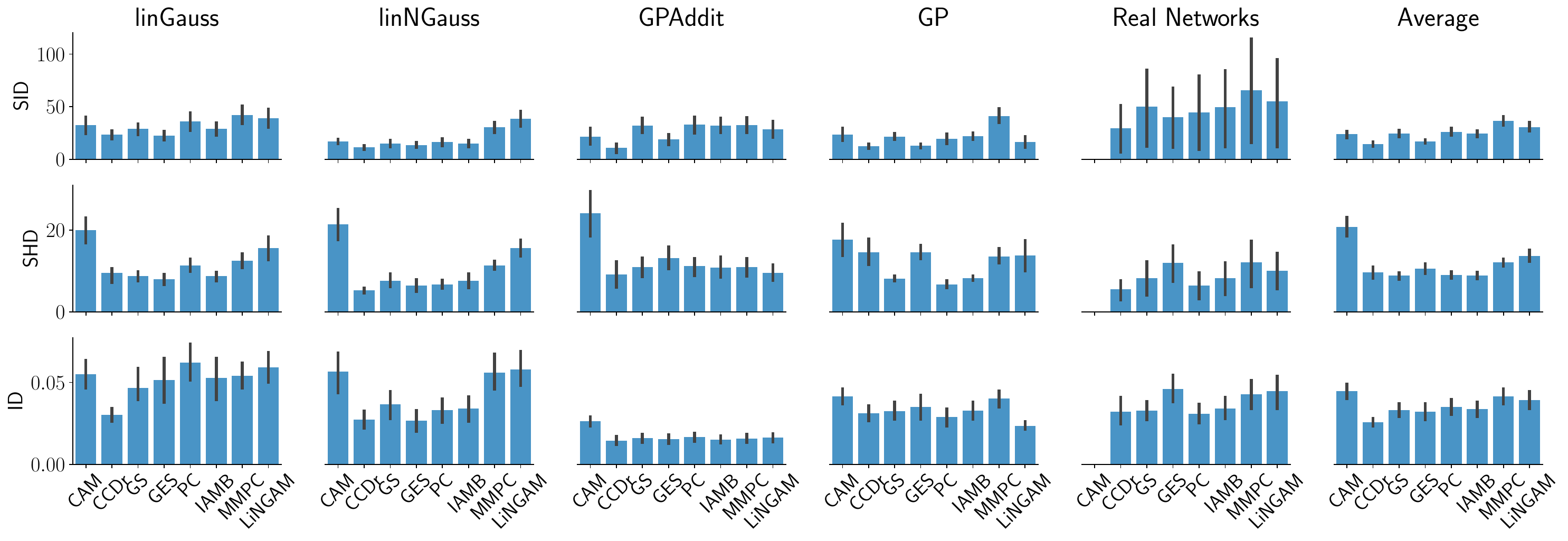}
    \caption{Evaluation of causal discovery techniques on synthetic networks and real-world networks (``Real Networks''). In the first row, models are evaluated by $\SID$, in the second row, by $\SHD$, and in the last row, by $\ID$. The rightmost column shows average performance. Note that CAM yielded errors on some of the real-world networks and is thus not reported. Lower is better. Error bars are 95\% confidence intervals.}
    \label{fig:eval}
\end{figure*}

An important application of causal distances is the evaluation of causal discovery systems. 
% ${\OD}$, ${\ID}$, and ${\CD}$ can precisely evaluate the inferred causal model $\mathfrak{C}$ in comparison to a ground-truth causal model $\mathfrak{D}$ at each rung of the ladder of causation.
In this section, we illustrate this by evaluating several causal discovery systems using both real-world and synthetic causal models.

We considered the following real-world Bayesian causal models from \newcite{bnrepository}:
\textit{Cancer1},
%\cite{Lauritzen:1990},
% , a model of lung cancer (8 nodes, 8 edges);
\textit{Cancer2},
%\cite{korb},
% , a toy model connecting pollution and smoking to lung cancer (5 nodes, 4 edges);
\textit{Earthquake}, 
%\cite{korb}, 
% a model of alarm triggering (5 nodes, 4 edges);
\textit{Survey},
%\cite{scutari2014bayesian}, 
% a model of survey outcomes (6 nodes, 6 edges);
\textit{Protein}, 
%\cite{sachs2005}, 
% a model of protein signaling (11 nodes, 17 edges);
\textit{Child}, and 
% a model of the diseases that birth asphyxia may cause (20 nodes, 25 edges);
\textit{Insurance}.
%\cite{insurance}.
% a model of car insurance policies (27 nodes, 52 edges).
They range from $5$ to $27$ nodes and from $4$ to $52$ edges. 
For synthetic models, we sample random DAGs and random parametrizations using the CDT tool \cite{kalainathan2019causal}.
% \footnote{\url{https://github.com/FenTechSolutions/CausalDiscoveryToolbox/}} 
We consider the following parametrizations: \textit{linear Gaussian (linGauss)}, \textit{linear non-Gaussian (linNGauss)}, \textit{Gaussian process with additive noise (GPAddit)}, and \textit{Gaussian Process (GP)}.

%The main properties of each model are detailed in Table \ref{tab:detail-scm}.
For each model, we sample 2,000 observations from which causal discovery methods should recover the causal model. Since they are causal Bayesian networks, not structural causal models, we cannot compute counterfactuals~\cite{Pearl:2019}. Thus, we restrict ourselves to ${\ID}$, $\SHD$, and $\SID$.

\xhdr{Systems}
We consider multiple causal discovery methods for recovering the causal graph: CCDr~\cite{Aragam:2015}, PC~\cite{spirtes2000causation}, GES~\cite{chickering2002finding}, GIES~\cite{chickering2002optimal}, MMPC~\cite{mmpc}, IAMB~\cite{IAMB}, LiNGAM~\cite{shimizu2006linear}, and CAM~\cite{spirtes2000causation}.
Some of these techniques only output a partial DAG with undirected edges, and some only output a graph without parameters. 
To obtain a fair comparison of the full Bayesian networks, we perform all parameter estimation via maximum likelihood estimates based on the training data. When only a partial DAG is returned, we use the edge orientation that provides the best goodness of fit after the parameters have been estimated. Alternatively, one could report the mean performance over all possible directed graphs.
% This can be passed as an option to the tool we release.

\xhdr{Remark}
Our distances compare full causal models, \ie, causal graphs after the parameters have been estimated. Here, by fixing the parameter estimation, we measure the impact of the causal graph on the intervention predictions. It also ensures that two methods that output the same graph (DAG or partial DAG) will obtain the same evaluation results. 
Yet, as shown below, it does not mean that these evaluation results agree with graph-based metrics such as $\SHD$ or $\SID$.
% In future work, our distances can also be used to evaluate different parameter estimation techniques by keeping causal graphs fixed, or alternatively, evaluate full causal discovery pipelines.
%To obtain full Bayesian networks from the inferred causal graphs, we combine them with both MLE and MAP estimation of the parameters. Some of these techniques do not always construct a DAG and leave some edges undirected. When this happens, we use the edge orientation which provides the best goodness of fit after the parameters have been estimated.
%Furthermore, we evaluate techniques that directly build a causal model: LiNGAM~\cite{shimizu2006linear}, CAM~\cite{spirtes2000causation}, and two modern techniques based on neural networks: CGNN~\cite{Goudet2018} and SAM~\cite{kalainathan2018sam}.
For the systems, we use the implementations available in CDT. We compute maximum likelihood estimates using the \emph{Pomegranate} python framework.
% \footnote{\url{https://pomegranate.readthedocs.io}}

%In general, every causal graph can be compared using our causal distances after the parameters have been estimated following the same procedure. This measures the impact of the causal graph on the intervention prediction under a fixed parameter estimation. Alternatively, the results can be averaged over several different parameter estimations procedures. By default, the tool we release automatically compare causal graphs based on a fixed parameter estimation.

% For details about how these systems work, we redirect to the original papers.

\xhdr{Results}
In \Figref{fig:eval}, we report the evaluation results broken down by model parametrization. The \emph{Real Networks} block corresponds to the performance of systems averaged over the real-world causal models described above. Results for each causal model are available in Appendix C. 
% \Appref{app:model_perf}.
% Thus, we also perform an evaluation of causal discovery system broken down by model parametrization, shown in \Figref{fig:eval}. The \emph{Real Networks} block corresponds to results of \Tabref{tab:eval-systems} averaged across networks for comparison.
% The results on real networks are reported in \Tabref{tab:eval-systems}. 
We see that different metrics yield different rankings of systems. Thus, the differences between metrics observed in \Secref{sec:comparison} are also consequential in the task of causal discovery. 
% In particular, we observe low agreement between $\SID$ and $\ID$ on the Earthquake and Insurance networks. Also, IAMB and MMPC have the same $\SID$ (16) and $\SHD$ (7) on Cancer2 but different graphs which is distinguished by $\ID$. On the contrary, on the Protein dataset, GS and IAMB have the same graph and, with fixed parameter estimation, the same $\SHD$, $\SID$ and $\ID$. 
These observations emphasize the importance of employing the evaluation metric that captures the desired behavior. If only the observational distribution matters, ${\OD}$ should be used, but then causal discovery may not be needed in the first place. If we care about the expected errors in predicting the outcome of interventions, ${\ID}$ should be used, and $\SID$ can be employed when we focus on the causal graph under the assumption that the underlying observational distribution has already been correctly estimated.
% A peculiarity of $\SID$ is that it outputs integers only, which can result in ties, whereas ${\ID}$ produces continuous values and does not have this problem. 
% Furthermore, $\ID$ is normalized by default whereas $\SID$ and $\SHD$ greatly vary depending on the number of nodes.
Additionally, even a single metric produces different rankings of systems in different scenarios. Causal discovery requires assumptions about the underlying structure of the true causal model, and few guarantees are given when the respective assumptions are not met. Different networks fulfill different assumptions and are best handled by different causal discovery methods.
% This confirms the importance of prior knowledge and causal assumptions.
An evaluation using causal distances such as $\OD$, $\ID$, and $\CD$ is indispensable for illuminating which causal discovery method is best suited for which kind of data.
Interestingly, $\ID$ clearly reveals that systems struggle most for the linear Gaussian case, which is known to be unidentifiable. While all other cases are identifiable,
Gaussian processes with additive noise (GPAddit)
% the non-linear additive case
seem to be the easiest for existing causal discovery systems. Overall, CCDr seems to perform fairly well in comparison to other systems. 
%This conclusion is supported by both $\SID$ and $\ID$.
In Appendix E, 
% \Appref{app:more_data}
 we show that, contrary to machine learning, causal discovery systems do not benefit from more training data.

% \section{Applications and future work}
% \label{future_work}
% \input{060future_work}

\section{Conclusion}
\label{conclusion}
This paper introduces observational ($\OD$), interventional ($\ID$), and counterfactual ($\CD$) distances between causal models, one for each rung of the ladder of causation (\cf\ \Secref{intro}). Each distance is defined based on the lower-level ones, reflecting the hierarchical structure of the ladder. We study the properties of our distances and propose practical approximations that are useful for evaluating causal discovery techniques. We release a Python implementation of our causal distances.
%\footnote{\url{available-after-reviewing.}}

Our causal distances do not require the unrealistic assumptions of infinite samples and perfect statistical estimation that are currently common in the study of causality~\cite{Pearl:2009}. Also, they quantify the difference between causal models on a continuous, rather than integer, scale and make use of the data at a finer granularity than the usual binary measurements of methods such as SHD and $\SID$ (\cf\ \Secref{background}).

The proposed causal distances have both theoretical and empirical applications, 
%in artificial intelligence and beyond, 
and we hope the research community will use them to advance the study of causality.

\section*{Acknowledgments}
With support from
Swiss National Science Foundation (grant 200021\_185043),
European Union (TAILOR, grant 952215),
and gifts from
Google, Facebook, Microsoft
% }

\clearpage

%% The file named.bst is a bibliography style file for BibTeX 0.99c
\bibliographystyle{named}
\bibliography{CD}

\begin{thebibliography}{}

\bibitem[\protect\citeauthoryear{Acharya \bgroup \em et al.\egroup
  }{2018}]{NIPS2018_8155}
Jayadev Acharya, Arnab Bhattacharyya, Constantinos Daskalakis, and Saravanan
  Kandasamy.
\newblock Learning and testing causal models with interventions.
\newblock In S.~Bengio, H.~Wallach, H.~Larochelle, K.~Grauman, N.~Cesa-Bianchi,
  and R.~Garnett, editors, {\em Advances in Neural Information Processing
  Systems 31}, pages 9447--9460. Curran Associates, Inc., 2018.

\bibitem[\protect\citeauthoryear{Acid and de Campos}{2003}]{Acid:2003}
Silvia Acid and Luis~M. de~Campos.
\newblock {Searching for Bayesian Network Structures in the Space of Restricted
  Acyclic Partially Directed Graphs}.
\newblock {\em Journal of Artificial Intelligence Research}, 18(1):445--490,
  may 2003.

\bibitem[\protect\citeauthoryear{Aragam and Zhou}{2015}]{Aragam:2015}
Bryon Aragam and Qing Zhou.
\newblock {Concave Penalized Estimation of Sparse Gaussian Bayesian Networks}.
\newblock {\em Journal of Machine Learning Research}, 16(1):2273--2328, January
  2015.

\bibitem[\protect\citeauthoryear{Chickering and
  Meek}{2002}]{chickering2002finding}
David~Maxwell Chickering and Christopher Meek.
\newblock {Finding Optimal Bayesian Networks}.
\newblock In {\em Proceedings of the Eighteenth conference on Uncertainty in
  artificial intelligence}, pages 94--102. Morgan Kaufmann Publishers Inc.,
  2002.

\bibitem[\protect\citeauthoryear{Chickering}{2002}]{chickering2002optimal}
David~Maxwell Chickering.
\newblock Optimal structure identification with greedy search.
\newblock {\em Journal of machine learning research}, 3(Nov):507--554, 2002.

\bibitem[\protect\citeauthoryear{de Jongh and
  Druzdzel}{2009}]{de2009comparison}
Martijn de~Jongh and Marek~J Druzdzel.
\newblock {A Comparison of Structural Distance Measures for Causal Bayesian
  Network Models}.
\newblock {\em Recent Advances in Intelligent Information Systems, Challenging
  Problems of Science, Computer Science series}, pages 443--456, 2009.

\bibitem[\protect\citeauthoryear{Elidan}{2001}]{bnrepository}
Gal Elidan.
\newblock {Bayesian Network Repository}.
\newblock \url{https://www.cse.huji.ac.il/~galel/Repository}, 2001.
\newblock Accessed: 2021-01-15.

\bibitem[\protect\citeauthoryear{Gentzel \bgroup \em et al.\egroup
  }{2019}]{NIPS2019_9345}
Amanda Gentzel, Dan Garant, and David Jensen.
\newblock The case for evaluating causal models using interventional measures
  and empirical data.
\newblock In H.~Wallach, H.~Larochelle, A.~Beygelzimer, F.~d'Alch\'{e} Buc,
  E.~Fox, and R.~Garnett, editors, {\em Advances in Neural Information
  Processing Systems 32}, pages 11717--11727. Curran Associates, Inc., 2019.

\bibitem[\protect\citeauthoryear{Kalainathan and
  Goudet}{2019}]{kalainathan2019causal}
Diviyan Kalainathan and Olivier Goudet.
\newblock Causal discovery toolbox: Uncover causal relationships in python,
  2019.

\bibitem[\protect\citeauthoryear{Kusner \bgroup \em et al.\egroup
  }{2017}]{NIPS2017_6995}
Matt~J Kusner, Joshua Loftus, Chris Russell, and Ricardo Silva.
\newblock Counterfactual fairness.
\newblock In I.~Guyon, U.~V. Luxburg, S.~Bengio, H.~Wallach, R.~Fergus,
  S.~Vishwanathan, and R.~Garnett, editors, {\em Advances in Neural Information
  Processing Systems 30}, pages 4066--4076. Curran Associates, Inc., 2017.

\bibitem[\protect\citeauthoryear{Pearl and Mackenzie}{2018}]{PearlMackenzie18}
Judea Pearl and Dana Mackenzie.
\newblock {\em {The Book of Why}}.
\newblock Basic Books, New York, 2018.

\bibitem[\protect\citeauthoryear{Pearl}{2009}]{Pearl:2009}
Judea Pearl.
\newblock {\em Causality: Models, Reasoning and Inference}.
\newblock Cambridge University Press, New York, NY, USA, 2nd edition, 2009.

\bibitem[\protect\citeauthoryear{Pearl}{2019}]{Pearl:2019}
Judea Pearl.
\newblock The seven tools of inference, with reflections on machine learning.
\newblock {\em Communications of the ACM}, 62(3):54--60, February 2019.

\bibitem[\protect\citeauthoryear{Peters and B{\"u}hlmann}{2015}]{SID}
Jonas~Martin Peters and Peter B{\"u}hlmann.
\newblock {Structural intervention distance for evaluating graphs}.
\newblock {\em Neural Computation}, 27(3):771--799, 3 2015.

\bibitem[\protect\citeauthoryear{Peters \bgroup \em et al.\egroup
  }{2017}]{Peters2017}
Jonas~Martin Peters, Dominik Janzing, and Bernard Sch\"olkopf.
\newblock {\em {Elements of Causal Inference: Foundations and Learning
  Algorithms}}.
\newblock MIT Press, Cambridge, MA, USA, 2017.

\bibitem[\protect\citeauthoryear{Rojas-Carulla \bgroup \em et al.\egroup
  }{2018}]{RojSchTurPet18}
Mateo Rojas-Carulla, Bernhard Sch{\"o}lkopf, Richard Turner, and Jonas Peters.
\newblock Invariant models for transfer learning.
\newblock {\em Journal of Machine Learning Research}, 19(36):1--34, 2018.

\bibitem[\protect\citeauthoryear{Shimizu \bgroup \em et al.\egroup
  }{2006}]{shimizu2006linear}
Shohei Shimizu, Patrik~O Hoyer, Aapo Hyv{\"a}rinen, and Antti Kerminen.
\newblock {A Linear Non-Gaussian Acyclic Model for Causal Discovery}.
\newblock {\em Journal of Machine Learning Research}, 7(Oct):2003--2030, 2006.

\bibitem[\protect\citeauthoryear{Singh \bgroup \em et al.\egroup
  }{2017}]{singh2017comparative}
Karamjit Singh, Garima Gupta, Vartika Tewari, and Gautam Shroff.
\newblock Comparative benchmarking of causal discovery techniques.
\newblock {\em arXiv preprint arXiv:1708.06246}, 2017.

\bibitem[\protect\citeauthoryear{Spirtes \bgroup \em et al.\egroup
  }{2000}]{spirtes2000causation}
Peter Spirtes, Clark~N. Glymour, Richard Scheines, David Heckerman, Christopher
  Meek, Gregory Cooper, and Thomas Richardson.
\newblock {\em {Causation, Prediction, and Search}}.
\newblock MIT press, 2000.

\bibitem[\protect\citeauthoryear{Sriperumbudur \bgroup \em et al.\egroup
  }{2010}]{Sriperumbudur:2010:HSE:1756006.1859901}
Bharath~K. Sriperumbudur, Arthur Gretton, Kenji Fukumizu, Bernhard
  Sch\"{o}lkopf, and Gert~R.G. Lanckriet.
\newblock Hilbert space embeddings and metrics on probability measures.
\newblock {\em Journal of Machine Learning Research}, 11:1517--1561, August
  2010.

\bibitem[\protect\citeauthoryear{Theis \bgroup \em et al.\egroup
  }{2015}]{theis2015note}
Lucas Theis, A{\"a}ron van~den Oord, and Matthias Bethge.
\newblock A note on the evaluation of generative models.
\newblock {\em arXiv preprint arXiv:1511.01844}, 2015.

\bibitem[\protect\citeauthoryear{Tsamardinos \bgroup \em et al.\egroup
  }{2003a}]{mmpc}
Ioannis Tsamardinos, Constantin~F. Aliferis, and Alexander Statnikov.
\newblock {Time and Sample Efficient Discovery of Markov Blankets and Direct
  Causal Relations}.
\newblock In {\em {Proceedings of the Ninth ACM SIGKDD International Conference
  on Knowledge Discovery and Data Mining}}, page 673–678, New York, NY, USA,
  2003. Association for Computing Machinery.

\bibitem[\protect\citeauthoryear{Tsamardinos \bgroup \em et al.\egroup
  }{2003b}]{IAMB}
Ioannis Tsamardinos, Constantin~F. Aliferis, and Alexander~R. Statnikov.
\newblock {Algorithms for Large Scale Markov Blanket Discovery}.
\newblock In {\em FLAIRS Conference}, pages 376--381. AAAI Press, 2003.

\bibitem[\protect\citeauthoryear{Verma and Pearl}{1991}]{Verma:1990}
Thomas Verma and Judea Pearl.
\newblock {Equivalence and Synthesis of Causal Models}.
\newblock In {\em Proceedings of the Sixth Annual Conference on Uncertainty in
  Artificial Intelligence}, UAI '90, pages 255--270, 1991.

\bibitem[\protect\citeauthoryear{Villani}{2008}]{villani2008optimal}
C{\'e}dric Villani.
\newblock {\em Optimal transport: old and new}, volume 338.
\newblock Springer Science \& Business Media, 2008.

\end{thebibliography}

\clearpage

\appendix
\section{Case study}
\label{app:case_study}
To illustrate the problems of graph-based metrics like $\SID$, consider the two causal models $\mathfrak{C}_1$ and $\mathfrak{C}_2$ over the two nodes $A$ and $B$ presented in \Secref{ssec:limitations}. Both models have the graph $A \to B$, with $A \sim \mathcal{N}(0, \sigma_A)$ and $B$'s noise $N_B \sim \mathcal{N}(0, \sigma_B)$:
\begin{align}
\mathfrak{C}_1: \ B &= A + N_B, \\
\mathfrak{C}_2: \ B &= -A + N_B.
\end{align}

Even though they have the same causal graph, they predict different values for the intervention $\DO(A=a), a \neq 0$:
\begin{align}
&P^{\mathfrak{C}_1;\DO(A=a)}_B = \mathcal{N}(a, \sigma_B), \\
&P^{\mathfrak{C}_2;\DO(A=a)}_B = \mathcal{N}(-a, \sigma_B).
\end{align}
In a toy interpretation, $B$ could be the improvement in life expectancy, and $A$ the daily intake of some drug. Then these two models would give rise to opposite policies given the goal of maximizing life expectancy.

$\SID$ fails because it relies on the assumption that both models agree on the underlying observational distribution. This is not the case here. Yet, the interventional distributions can indicate opposite policies even when the two observational distributions are made arbitrarily close to each other.

In particular, when $\sigma_A \ll \sigma_B$, the observational distributions become almost indistinguishable. One could expect $\SID$ to converge to the true answer when the observational distributions converge to each other, but this is not the case. Indeed, here, no matter how small is the difference between the two distributions, as long as it is not precisely $0$, $\SID$ will keep predicting $0$ as the number of wrongly inferred interventional distributions. This is particularly problematic because, in practice, we are never in the infinite sample regime and can never estimate perfectly the observational distribution.

\Figref{fig:case_study_1} depicts the observational and interventional distributions of both models under the action $\DO(A=3)$ with $\sigma_A = \sigma_B = 1$.
Similarly, ~\Figref{fig:case_study_2} shows the same distributions, but with $\sigma_A = 0.1 \cdot \sigma_B$.
We observe that the interventional distributions remain unchanged and different from each other ($\ID$ is constant) even if the observational distributions become almost indistinguishable as $\frac{\sigma_A}{\sigma_B}$ becomes smaller.

This problem is resolved by considering the interventional distance $\ID$ instead of $\SID$, since ID recognizes the interventional distributions as strictly different: $\ID(\mathfrak{C}_1, \mathfrak{C}_2) \approx 0.343$ when $\sigma_A = 0.1 \cdot \sigma_B$ (it is $\approx 0.625$ when $\sigma_A = \sigma_B=1$).

\begin{figure*}[t!]
     \centering
     \begin{subfigure}[t]{0.48\textwidth}
         \centering
         %\captionsetup{justification=centering}
         \includegraphics[width=\textwidth]{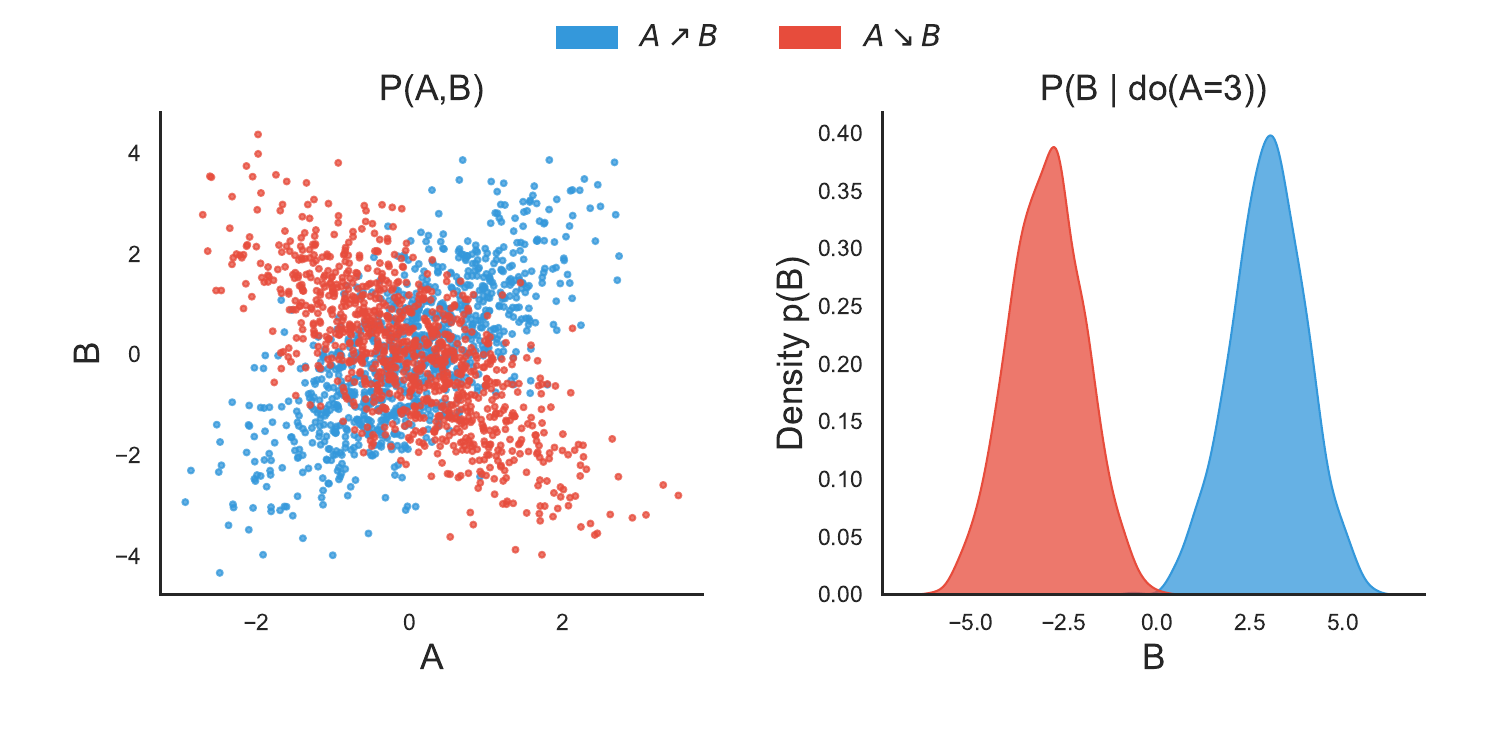}
         \caption{Observational and interventional distributions of both models with $\sigma_A = \sigma_B = 1$.}
         \label{fig:case_study_1}
     \end{subfigure}
     \hfill
     \begin{subfigure}[t]{0.48\textwidth}
         \centering
         %\captionsetup{justification=centering}
         \includegraphics[width=\textwidth]{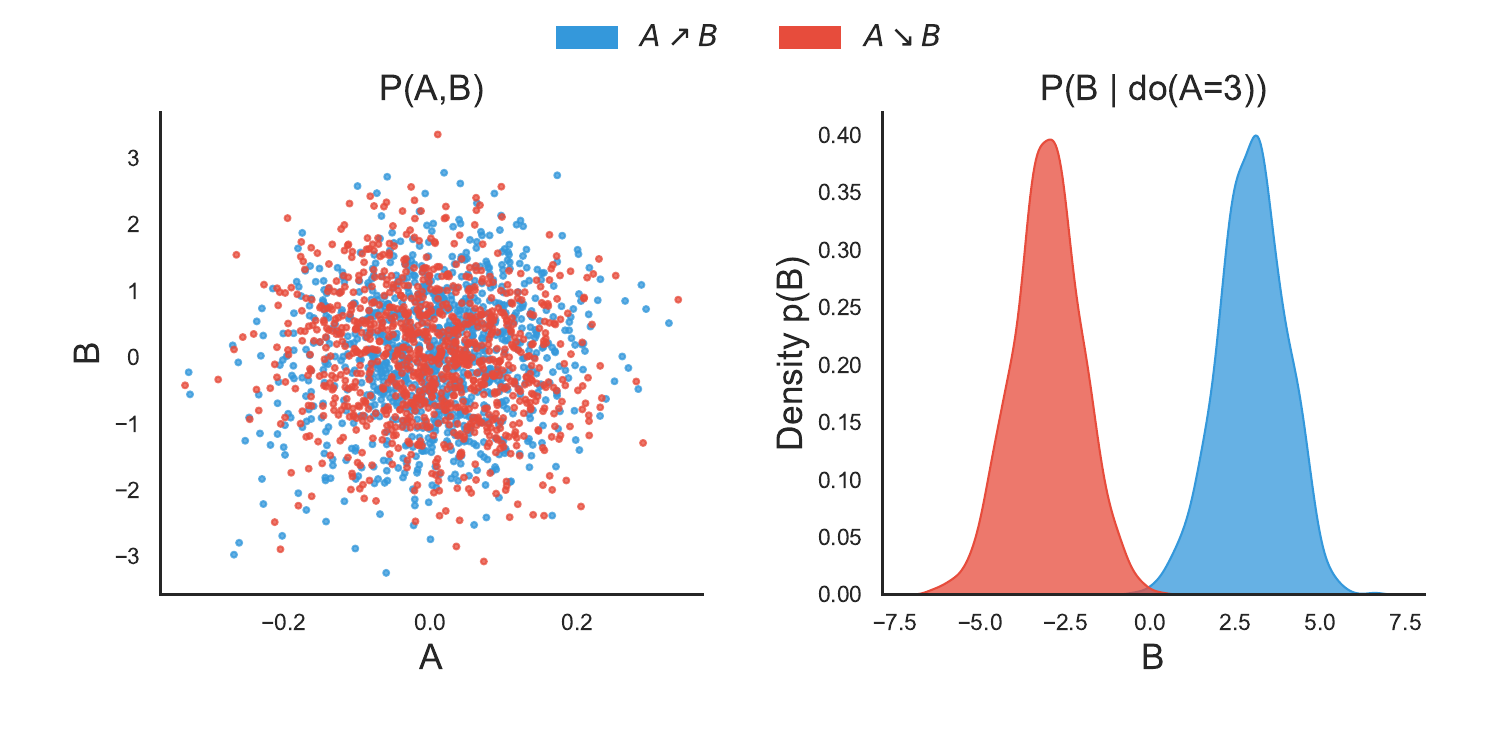}
         \caption{Observational and interventional distribution of both models with $\sigma_A = 0.1 \cdot \sigma_B$.}
         \label{fig:case_study_2}
     \end{subfigure}
     \caption{Example of two causal models $\mathfrak{C}_1$ (blue) and $\mathfrak{C}_2$ (red) with the same graph. In \Figref{fig:case_study_1}, the observational and interventional distributions are different. In \Figref{fig:case_study_2} the observational distributions are similar (due to the noise structure), but the interventional distributions remain different.}
\end{figure*}

\section{Details about the efficiency of the causal distances}
\label{app:efficiency}

In this section, we denote as $\tilde{\OD}$, $\tilde{\ID}$, and $\tilde{\CD}$ the practical estimations of the theoretical $\OD$, $\ID$, and $\CD$ and study their behavior.

\subsection{Sample efficiency}
\label{sec:sample efficiency}

\begin{figure*}[t!]
     \centering
     \begin{subfigure}[t]{0.31\textwidth}
         \centering
        %  \captionsetup{justification=centering}
         \includegraphics[width=\textwidth]{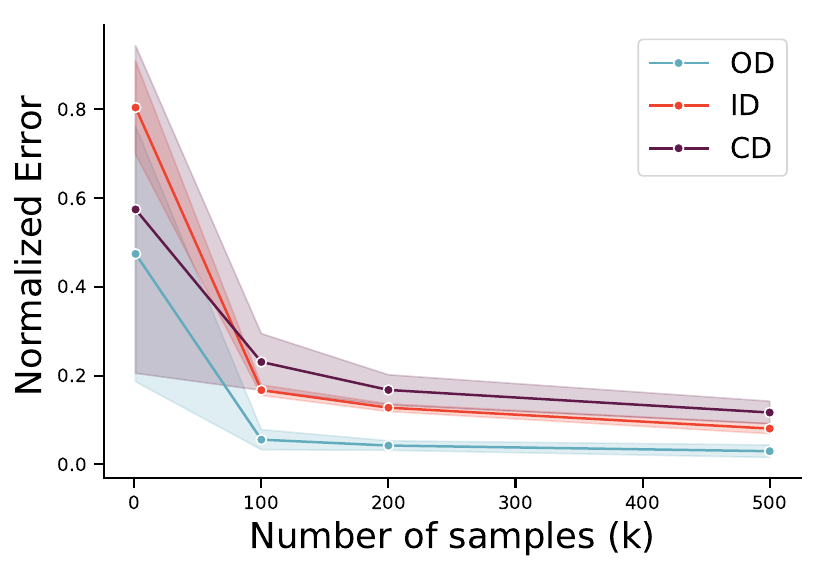}
         \caption{Sample efficiency of $\tilde{\OD}$, $\tilde{\ID}$, and $\tilde{\CD}$.}
         \label{fig:sample-efficiency}
     \end{subfigure}
     \hfill
     %\begin{subfigure}[t]{0.24\textwidth}
     %    \centering
        %  \captionsetup{justification=centering}
     %    \includegraphics[width=\textwidth]{images/Sample_efficiency-parameters.pdf}
     %    \caption{Sample efficiency of \\ $k$, $l$, and $m$ in isolation.}
     %    \label{fig:sample-efficiency-parameters}
     %\end{subfigure}
     %\hfill
     \begin{subfigure}[t]{0.31\textwidth}
         \centering
        %  \captionsetup{justification=centering}
         \includegraphics[width=\textwidth]{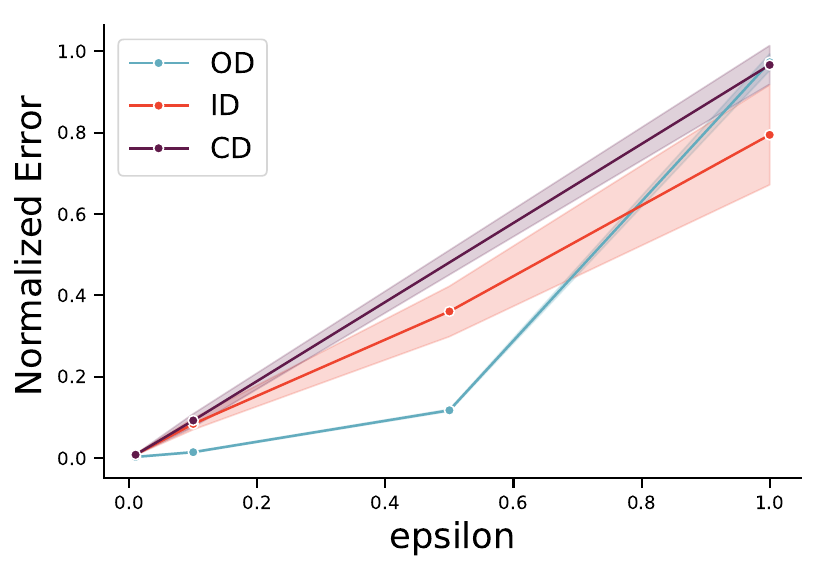}
         \caption{Sensitivity of $\tilde{\OD}$, $\tilde{\ID}$, and $\tilde{\CD}$ to perturbations.}
         \label{fig:sensitivity}
     \end{subfigure}
     \hfill
     \begin{subfigure}[t]{0.31\textwidth}
         \centering
        %  \captionsetup{justification=centering}
         \includegraphics[width=\textwidth]{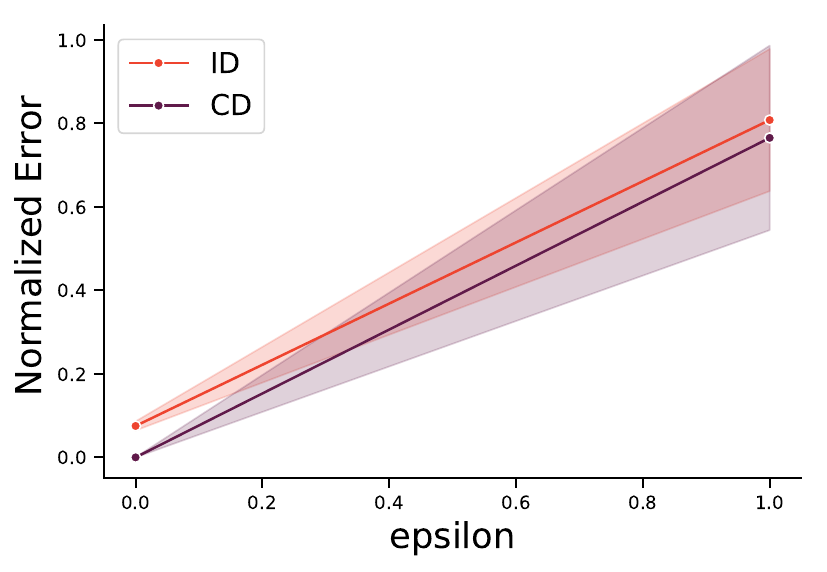}
         \caption{Sensitivity of $\tilde{\ID}$ (resp. $\tilde{\CD}$) to perturbations that leave ${\OD}$ (resp. ${\ID}$) unchanged.}
         \label{fig:sensitivity-specific}
     \end{subfigure}
     \caption{Sample efficiency and sentitivity of the proposed approximations of $\OD$, $\ID$, and $\CD$.}
\end{figure*}

We validate the sample efficiency of approximating ${\OD}$, ${\ID}$, and ${\CD}$ by observing how quickly the estimates converge to $0$ when comparing a randomly sampled causal model to itself.
% , an instance for which we know the values of $\OD$, $\ID$ and $\CD$.
%While it is possible to compute the answers analytically for some very specific causal models (linear Gaussian), comparing the causal model to itself allows to evaluate sample efficiency for arbitrary causal models.

%$\tilde{OD}$ is estimated by drawing $k$ samples from the observation distributions. $\tilde{ID}$ applies $\tilde{OD}$ for each of the $d$ interventional distributions and for $l$ values $\{do(I=i_j)\}\limits_{j=1}^l$ giving $d \cdot l \codt k$ samples. Then, $\tilde{CD}$ further requires to sample $m$ starting states $\bX = \bx$ yielding $m \cdot d \cdot l \cdot k$.

In \Figref{fig:sample-efficiency}, we fix $l= 100$ and $m = 10$, sample graphs with $d=6$ nodes, and vary the number $k$ of samples used to estimate the distributions.
The shaded area represents the standard deviation after repeating the experiment $10$ times with different random seeds.
We observe a quick decrease towards $0$ for each distance.

\subsection{Sensitivity to perturbation}
\label{sec:sensitivity}
In the next experiment, we verify that $\tilde{\OD}$, $\tilde{\ID}$, and $\tilde{\CD}$ can capture perturbations of causal models despite the imperfect approximations due to the finite sample size.

We randomly draw a causal model $\mathfrak{C}$ and perturb one of its mechanisms $f_i$ by adding another random mechanism $g_i$ according to a perturbation parameter $\epsilon \in [0,1]$. This results in a new causal model $\mathfrak{C}_{\epsilon, g_i}$ identical to $\mathfrak{C}$ except for the $i$-th mechanism, which is replaced by $(1-\epsilon) \, f_i + \epsilon \, g_i$. When $\epsilon=0$, $\mathfrak{C}_{\epsilon, g_i} = \mathfrak{C}$ and we expect the distance to be $0$. As $\epsilon$ increases, the distance should grow.

\Figref{fig:sensitivity} plots the growth of $\tilde{\OD}$, $\tilde{\ID}$, and $\tilde{\CD}$ as functions of the perturbation $\epsilon$ using $k=1000$, $l=100$ and $m=10$. Each distance increases with $\epsilon$, with the effect being more visible higher up the ladder of causation. Intuitively, slight perturbations have the potential to induce large deviations when going up the ladder. Indeed, a perturbation of the mechanism modifies the likelihood terms, which also modifies the Bayesian update of the noise variables.
%Thus, the mechanism modification also perturbs counterfactual models.
Note that graph-based metrics such as $\SHD$ and $\SID$ cannot capture these nuances because the causal graph remains unchanged.

Next, we perturb $\mathfrak{C}$ in a way that $\OD$ (resp. $\ID$) remains unchanged and observe the variations in $\tilde{\ID}$ (resp. $\tilde{\CD}$). We detail how we proceed to create such perturbations in \Appref{app:detail_sensitivity}.
We denote with $\epsilon$ the parameter that quantifies these perturbations and report the results in \Figref{fig:sensitivity-specific}, which shows that both $\tilde{\ID}$ and $\tilde{\CD}$ detect their level-specific perturbations.

\subsection{Perturbing ID while keeping OD constant}
\label{app:detail_sensitivity}
We know that if two causal graphs are within the same Markov class they can support the same observational distribution~\cite{Verma:1990}. Thus, we take a causal model $\mathfrak{C}$ with graph $\mathcal{G}$ and compute its Markov equivalence class $\mathcal{M}(\mathcal{G})$.

We then consider a causal graph $\mathcal{H} \in \mathcal{M}(\mathcal{G})$ from the Markov equivalence class and select the perturbation quantification $\epsilon$ as:
\begin{equation}
    \epsilon = \frac{\SID(\mathcal{H}, \mathcal{G})}{\max \{\SID(\mathcal{H}, \mathcal{G})| \mathcal{H} \in \mathcal{M}(\mathcal{G})\}}.
\end{equation}

Then, we train an MLE parameter estimator using $\mathcal{H}$ to find the parameters that yield (almost) the same observational distribution as $\mathfrak{C}$. Thus, $\OD$ is expected to be (almost) constant while $\ID$ is perturbated. In particular, when $\epsilon = 0$, $\ID$ is expected to be $0$.

\subsection{Perturbing CD while keeping ID constant}
%Suppose we keep the causal graph fixed.
The interventional distributions remain unchanged if all the conditional distributions $P(X|\bPA_X)$ do not change. To preserve the interventional distributions, we can perturbate the structural equations like described in ~\Secref{sec:sensitivity} to generate ~\Figref{fig:sensitivity}, but also adjust the noise to precisely cancel the perturbation and keep the conditional distribution constant.

In practice, at one node $X$, we perturbate the noise distribution by adding a random Gaussian Mixture $GMM(k, \mu, \sigma)$. Here $k$ is the number of Gaussians, $\mu$ is an $k$-dimensional vector of means and $\Sigma$ the covariance matrix.
\begin{equation}
 P^{(\epsilon)}_{N_X} := (1 - \epsilon) P_{N_X} +  \epsilon GMM(k, \mu, \Sigma)   
\end{equation}
Here, $\epsilon$ quantifies the perturbation. 
In order to preserve the conditional probability distribution $P(X|\bPA_X)$ we fit a Gaussian process $g^{\epsilon}_X$ such that:
\begin{equation}
    g^{(\epsilon)}_X(\bPA_X, P^{(\epsilon)}_{N_X}) \approx f_X(\bPA_X, P_{N_X})
\end{equation}
Thus, $\ID$ is expected to stay (almost) fixed while $\CD$ is expected to be affected because the noise is changed. In particular, when $\epsilon$ is $0$ the causal model is not modified and when $\epsilon$ is $1$ the noise is fully replaced by the random Gaussian Mixture.

\section{Finegrained analysis of causal discovery methods}
\label{app:model_perf}
We report the performance of causal discovery methods for each real-world causal model we consider in \Tabref{tab:eval-systems}.

We observe that different metrics produce different rankings of systems. This shows that the differences between metrics observed in \Secref{sec:comparison} are also visible in the causal discovery evaluation setup. 

In particular, we observe low agreement between $\SID$ and $\ID$ on the Earthquake and Insurance networks. Also, IAMB and MMPC have the same $\SID$ (16) and $\SHD$ (7) on Cancer2 but different graphs which is distinguished by $\ID$. On the contrary, on the Protein dataset, GS and IAMB have the same graph and, with fixed parameter estimation, the same $\SHD$, $\SID$ and $\ID$. 

\begin{table*}
        \small
        \centering
        \resizebox{\textwidth}{!}{
        \begin{tabular}{l|ccc|ccc|ccc|ccc|ccc|ccc|ccc}
        \toprule
        &  \multicolumn{3}{c|}{Cancer1} & \multicolumn{3}{c|}{Cancer2} & \multicolumn{3}{c|}{Child} & \multicolumn{3}{c|}{Earthquake} &  \multicolumn{3}{c|}{Insurance} & \multicolumn{3}{c|}{Protein} & \multicolumn{3}{c}{Survey} \\
         & SID & SHD & ID & SID & SHD & ID & SID & SHD & ID & SID & SHD & ID & SID & SHD & ID & SID & SHD & ID & SID & SHD & ID \\
        %                           resp.       ||      pyramid         ||      resp                            ||      pyramid
        \midrule
        LinGAM          & 38 & 14 & 5.4 & 12 & 6 & 3.44 & 282 & 45 & 4.21 & 16 & 7 & 10.56 & 528 & 91 & 5.56 & 58 & 32 & 4.44 & 26 & 8 & 1.45 \\
        %CAM          & x.xx & x.xx & x.xx & x.xx & x.xx & x.xx & x.xx & x.xx & x.xx & x.xx & x.xx & x.xx & x.xx & x.xx & x.xx & x.xx & x.xx & x.xx & x.xx & x.xx & x.xx \\
        CCDr          & 6 & 11 & 1.75 & 1 & 1 & 1.32 & 57 & 13 & 3.65 & 0 & 5 & 8. & 456 & 51 & 5.53 & 18 & 27 & 3.25 & 12 & 9 & 1.7 \\
        GS          & 18 & 6 & 1.82 & 16 & 7 & 3.7 & 273 & 44 & 3.93 & 0 & 1 & 4.82 & 542 & 64 & 5.18 & 51 & 22 & 4.31 & 27 & 11 & 1.88 \\
        GES          & 44 & 21 & 6.15 & 20 & 8 & 3.35 & 189 & 78 & 3.64 & 20 & 11 & 9.26 & 545 & 98 & 5.6 & 50 & 48 & 5.09 & 27 & 15 & 1.8 \\
        PC          & 11 & 4 & 1.54 & 12 & 6 & 2.64 & 182 & 27 & 3.84 & 0 & 1 & 4.79 & 488 & 49 & 5.17 & 40 & 20 & 4.40 & 27 & 9 & 1.81 \\
        IAMB          & 18 & 6 & 1.63 & 16 & 7 & 3.62 & 253 & 43 & 3.81 & 0 & 1 & 6.17 & 588 & 67 & 5.26 & 51 & 22 & 4.31 & 27 & 11 & 1.82 \\
        MMPC          & 51 & 16 & 5.63 & 16 & 7 & 3.19 & 367 & 61 & 4.31 & 20 & 9 & 9.13 & 682 & 97 & 5.07 & 59 & 34 & 4.35 & 27 & 11 & 1.97 \\

        \bottomrule
        \end{tabular}
        }
        \caption{Evaluation of various causal discovery techniques with SID, OD and ID.}\label{tab:eval-systems}
\end{table*}

\section{Estimating causal distances in practice}
\label{app:approximation}
We now discuss the practical computation of $\OD$, $\ID$, and $\CD$. For general causal models, they cannot be computed analytically. Instead, we must draw finitely many samples and use empirical distances $\tilde{D}$ instead of the theoretical $D$. This results in estimated distances denoted by $\tilde{\OD}$, $\tilde{\ID}$, and $\tilde{\CD}$. 
%Pseudocode and further details are available in \Appref{app:pseudo-code}.
%This gives rise to a trade-off between speed and accuracy. 

\xhdr{Observational}
In order to estimate $\OD$, we draw $k$ samples from the joint observational distribution of each model and use a sample distance $\tilde{D}$.
Consequently, the estimated $\tilde{\OD}$ directly inherits the statistical properties of the chosen estimator $\tilde{D}$ and has sampling complexity $\mathcal{O}(k)$.
    
\xhdr{Interventional}
The computation of $\ID$ involves the application of $\tilde{\OD}$ to compare $dl$ pairs of interventional distributions:
for each node $I$ from the set of all $d$ nodes, $l$ intervention values $i$ are sampled from $P_I$, and the corresponding interventional distribution $P^{\mathfrak{C}; \DO(I=i)}_{\bX}$ is estimated by drawing $k$ samples. Thus, the sampling complexity is $\mathcal{O}(dlk)$. 
% Algorithm~\ref{alg:id} is the pseudo-code for the computation of $\tilde{\ID}$.
    
\xhdr{Counterfactual}
The estimation of $\CD$ involves the computation of $\tilde{\ID}$ on several modified causal models.
For each node $E$, $m$ evidence values $e$ are sampled from $P_E$. For each evidence $E=e$, the noise distributions of both models are updated using Bayes' rule (\cf \Eqnref{eq:noise bayes}) and $\tilde{\ID}$ is computed on these modified causal models.
The sampling complexity of $\tilde{\CD}$ is therefore $\mathcal{O}(d^2 m l k)$. 
% Algorithm~\ref{alg:cd} is the pseudo-code for the computation of $\tilde{\CD}$.

The Bayesian update can be computationally demanding. To address this, we first observe that we only need to sample from $P(\bN|E=e)$ (or $P_{\bN|E=e}$ in the notation of \Eqnref{eq:noise bayes}) to estimate $\ID$ in the induced counterfactual model. Using a general Gibbs sampler, it suffices to compute the likelihood term $P(E=e|\bN=\bn)$, and thanks to the Markov factorization property, this 
reduces to
%simplifies to $P(E=e|\bN) = 
$P(E=e|\bPA_E)$. The value of $E$ is set according to the structural equation $E = f_E(\bPA_E, N_E)$. When $\bPA_E$ is given but not the noise $N_E$, we obtain a probability distribution for $E$.
% Each likelihood could then be estimated at runtime by sampling the noise $\bN$ and empirically estimating $P(E=e|\bPA_E)$ with techniques such as density estimation. To speed up the computation, we instead propose a faster alternative, as follows.

%However, in our experiments, this renders the Gibbs sampling too slow.

% If $E$ takes on values from the discrete set $\{e_1, \dots, e_n\}$, we can speed up the Gibbs sampler by simply precomputing the likelihood estimates $\{P(E =e_i |\bPA_E)\}$.
% Otherwise, if $E$ is a continuous random variable, we first discretize it and then pick $n$ evenly spaced values $\{e_1, \dots, e_n\}$ for which we precompute the likelihood estimates.
% In order to speed-up the Gibbs sampler, we precompute $n$ likelihood estimates $\{P(E =e^{(j)} |\bPA_E\}\limits_{j=1}^{n}$ for $n$ evenly spaced values $\{E = e^{(j)}\}\limits_{j=1}^{n}$.
% At runtime, when the observation $E=e$ is given, we retrieve the nearest neighbor of $e$ and use its precomputed value.
%\begin{align}
% \tilde{P}(E=e|\bPA_E) &\approx P(E=e^{*}_i|\bPA_E) \\
% \text{such that } & e^{*}_i = \argmin\limits_{e_j} |e - e_j|
%\end{align}
% In practice, the computation of $\tilde{\CD}$ is orders of magnitude faster than the na\"ive algorithm where the likelihood terms are estimated at runtime.

\xhdr{Handling of continuous input}
We require that the intervention and evidence values $i$ are drawn from a distribution with full support over $\Omega_I$~\cite{Peters2017}. In the discrete case, it is straightforward to assign a uniform distribution over the elements of $\Omega_I$. However, in the continuous case, we can use the standard Gaussian distribution. But any other user-defined perturbative distribution can be employed.

\section{Impact of more training data}
\label{app:more_data}
\begin{figure}
    \centering
    % \captionsetup{justification=centering}
    \begin{subfigure}[t]{0.49\columnwidth}
         \centering
        %  \captionsetup{justification=centering}
         \includegraphics[width=\textwidth]{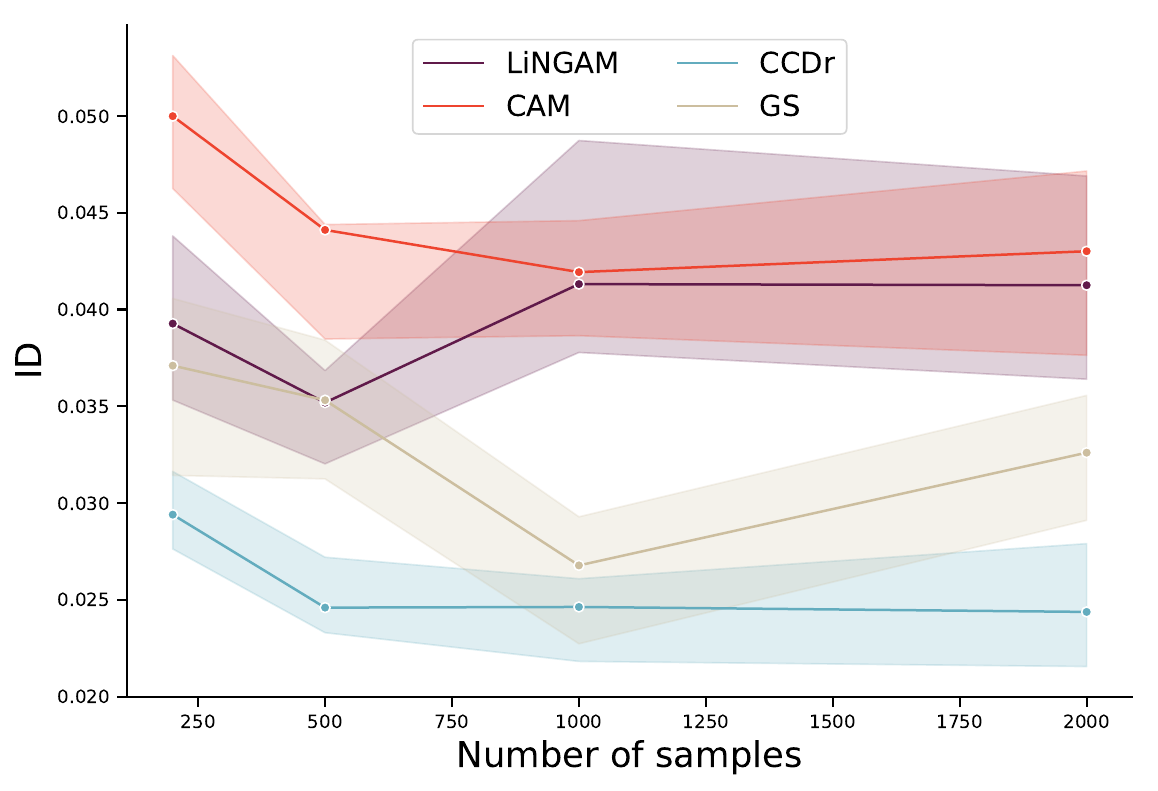}
         \caption{LiNGAM, CAM, CCDr, and GS.}
         \label{fig:samples_1}
     \end{subfigure}
     \hfill
     \begin{subfigure}[t]{0.49\columnwidth}
         \centering
        %  \captionsetup{justification=centering}
         \includegraphics[width=\textwidth]{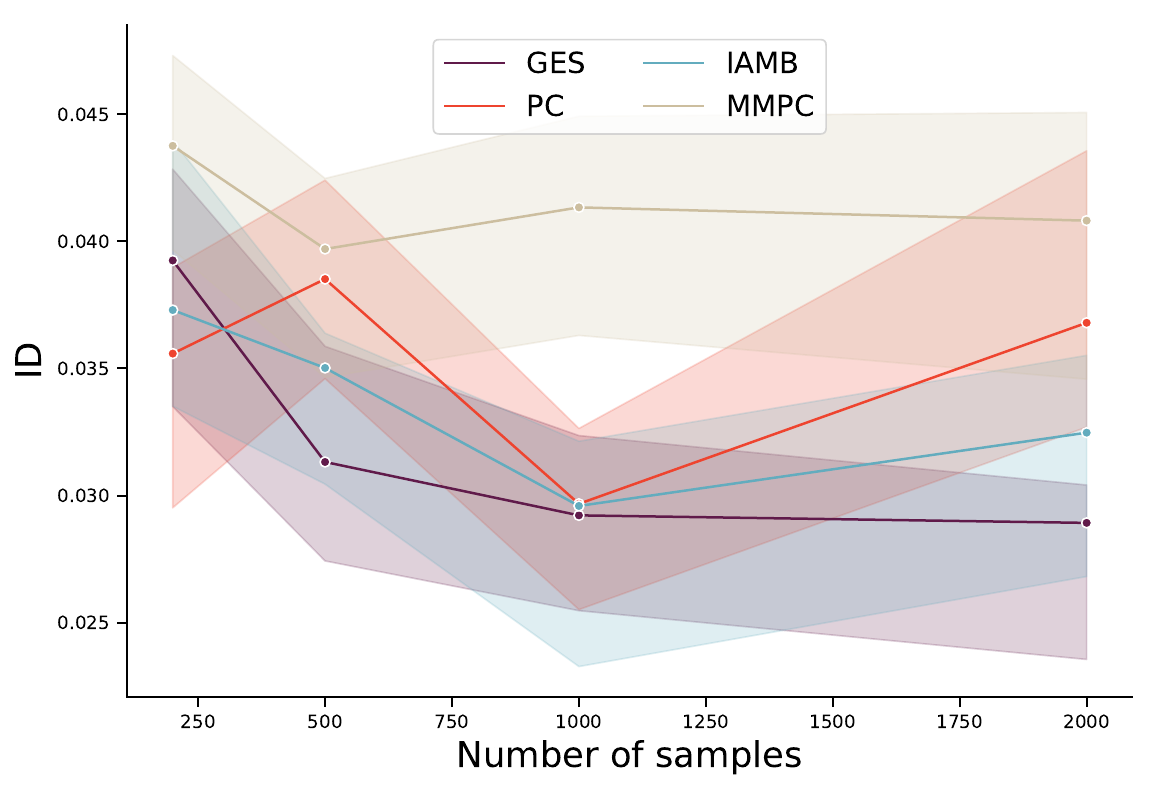}
         \caption{GES, PC, IAMB, and MMPC.}
         \label{fig:samples_2}
     \end{subfigure}
    \caption{Variation in performance of causal discovery systems measured by $\ID$ when more training data is available.}
    %\label{fig:sid-id-od}
\end{figure}

Finally, since we generated several datasets with varying number of training samples, we can measure how well systems benefit from more training data. This is reported in \Figref{fig:samples_1} and \Figref{fig:samples_2}, where the systems are arbitrarily split into two groups to avoid overcrowding one figure. Interestingly, systems seem to not clearly benefit from accessing more training data. In fact, it is a particularity of causal inference that even infinite observational data does not necessarily help to infer the causal model. 

\section{Proofs}
\label{app:proofs}

\subsection{Preliminaries}
\label{app:proof-prelim}

\paragraph{Assumptions.}
Throughout the proofs, we assume that all models satisfy the following conditions:
\begin{itemize}
\item \emph{Markov property}: 
Every conditional independence statement entailed by the causal graph is satisfied by the joint distribution: $\forall \ \bW,\bY, \bZ \in 2^{\bX}, \ \bW \independent_{\mathcal{G}} \bY|\bZ \implies \bW \independent \bY | \bZ$, where $\independent_{\mathcal{G}}$ stands for d-separated in $\mathcal{G}$~\cite{Pearl:2009}.
\item \emph{Causal minimality}:
The joint distribution satisfies the Markov property for $\mathcal{G}$ but not for any proper subgraph of $\mathcal{G}$.
\item \emph{Causal faithfulness}: 
Every conditional independence within the joint distribution is entailed by the causal graph:
$\forall \ \bW,\bY, \bZ \in 2^{\bX}, \ \bW \independent \bY|\bZ \implies \bW \independent_{\mathcal{G}} \bY | \bZ$
\item \emph{Positiveness}:
The entailed marginal and conditional distributions are strictly positive.
\end{itemize}

Then, we say that a node $X \in \bX$ \emph{has an effect on} another node $Y \in \bX$ when:
\begin{equation}
    \exists x \neq x', \ \ P^{\mathfrak{C}; \DO(X=x)}_{Y} \neq  P^{\mathfrak{C}; \DO(X=x')}_{Y}
\end{equation} 

\begin{itemize}
\item If $X$ has an effect on $Y$, then $X$ is an ancestor of $Y$. 

\item If $X$ is a parent of $Y$, then $X$ has an effect on $Y$ except if there exists a canceling path, i.e., $\exists Z_1 \dots Z_k, \ \ X \to Z_1 \to \dots \to Z_k \to Y$ which precisely cancels the effect $X \to Y$.
\end{itemize}

We order the proofs out of convenience instead of following the order in which they appear in the paper.

\subsection{Proof of Theorem~\ref{th:cd_ladder}}
\cdladder*
\begin{proof}
% Let $\mathfrak{C}_1$ and $\mathfrak{C}_2$ be two models. such that, for some $\epsilon \geq 0$:
% \begin{equation}
%     \ID(\mathfrak{C}_1, \mathfrak{C}_2) \leq \epsilon
% \end{equation}

We note $\OD_i = \OD(\mathfrak{C}_1; \DO(I=i), \mathfrak{C}_2; \DO(I=i) )$, the distance between the interventional distributions resulting from $\DO(I=i)$. 
Then, $\ID$ can be decomposed as:
\begin{equation}
    \ID(\mathfrak{C}_1, \mathfrak{C}_2) =
     \frac{1}{|\bX| + 1}\left( \OD(\mathfrak{C}_1, \mathfrak{C}_2) + \sum\limits_{I \in \bX} \mathbb{E}_{i \sim P_I} \OD_i \right) \\
\end{equation}
Since $\OD$ is a distance between distributions, $\OD_i$ is positive, and the expectations inside the sum are positive. Finally:
\begin{align}
    %  \ID(\mathfrak{C}_1, \mathfrak{C}_2) &\leq \epsilon \\
    %  (|\bX| + 1)\ID(\mathfrak{C}_1, \mathfrak{C}_2) &\leq (|\bX| + 1)\epsilon \\
    \OD(\mathfrak{C}_1, \mathfrak{C}_2) &\leq (|\bX| + 1)\ID(\mathfrak{C}_1, \mathfrak{C}_2)
\end{align}

The same reasoning gives:
\begin{equation}
    \ID(\mathfrak{C}_1, \mathfrak{C}_2) \leq (|\bX| + 1) \CD(\mathfrak{C}_1, \mathfrak{C}_2)
\end{equation}

\end{proof}

\subsection{Proof of Theorem~\ref{th:id_sid}}

\idsid*

\begin{proof}
    Suppose  $\ID(\mathfrak{C}_1, \mathfrak{C}_2) = 0$. From \Thmref{th:cd_ladder}, we know that $\OD(\mathfrak{C}_1, \mathfrak{C}_2) = 0$. 
    
    The two models belong to the same Markov equivalence class.  Thus, they have the same skeleton and v-structures~\cite{Verma:1990}. Furthermore, orienting new edges cannot create new v-structures. 
    
    Suppose some edges are still oriented differently. For example, consider the edge between $X$ and $Y$ left unoriented in the Markov equivalence class.
    Without loss of generality, suppose $X \to Y$ in $\mathcal{G}_1$.
    
    Now, $X$ has an effect on $Y$ in $\mathcal{G}_1$ because there cannot be a cancelling path. If there were a cancelling path, the orientation $X \to Y$ would create new v-structure.
    Since $X$ has an effect on $Y$ in $\mathcal{G}_1$, $X$ also has an effect on $Y$ in $\mathcal{G}_2$ (the models agree on any interventions). Finally, the edge has to go from $X$ to $Y$ in $\mathcal{G}_2$.
    
    Thus, we conclude $\mathcal{G}_1 = \mathcal{G}_2$. Finally, $\SID$ and $\SHD$ only consider the adjacency matrices and therefore they are also $0$. \newcite{SID} proved that $\SHD(\mathfrak{C}_1, \mathfrak{C}_2) = 0 \implies \SID(\mathfrak{C}_1,\mathfrak{C}_2) =0$. 
    A counterexample to the converse implication of \Eqnref{eeq:implication_1} is given by the models of the case study presented in the paper.
\end{proof}

\subsection{Proof of Theorem~\ref{th:id_sid_od}}
\sididod*

\begin{proof}
Suppose $\OD(\mathfrak{C}_1, \mathfrak{C}_2) = 0$. Then, the two models have graphs belonging to the same Markov equivalence class, i.e., same skeleton and v-structures~\cite{Verma:1990}. 

We already know from \Thmref{th:id_sid} that $\ID(\mathfrak{C}_1, \mathfrak{C}_2) = 0 \implies \SID(\mathfrak{C}_1, \mathfrak{C}_2) = 0$.

Suppose $\SID(\mathfrak{C}_1, \mathfrak{C}_2) = 0$. Then, no edge between any two nodes can be oriented differently in the two graphs. To see that, consider the edge between $X$ and $Y$ left unoriented in the Markov equivalence class. Without loss of generality, suppose $X \to Y$ in $\mathcal{G}_1$. Now, $X$ has an effect on $Y$ in $\mathcal{G}_1$ because there cannot be a cancelling path. If there were a cancelling path, the orientation $X \to Y$ would create new v-structure. Since $X$ has an effect on $Y$ in $\mathcal{G}_1$, $X$ also has an effect on $Y$ in $\mathcal{G}_2$ because the models agree on any interventions. Finally, the edge goes from $X$ to $Y$ in $\mathcal{G}_2$. Therefore, the two graphs are the same. 

As the two graphs are the same and the observational distributions are the same, we can conclude that $\ID(\mathfrak{C}_1, \mathfrak{C}_2)=0$.

A counterexample when $\OD(\mathfrak{C}_1, \mathfrak{C}_2) \neq 0$ is given by the case study presented in the paper.
\end{proof}

\end{document}